%% file: paper.tex
\documentclass{article}

\usepackage[preprint]{icml2026}

\usepackage{enumitem}
\usepackage[utf8]{inputenc} 
\usepackage[T1]{fontenc}    
\usepackage{hyperref}       
\usepackage{url}            
\usepackage{booktabs}       
\usepackage{amsfonts}       
\usepackage{nicefrac}       
\usepackage{microtype}      
\usepackage{xcolor}         
\usepackage[subrefformat=parens,labelformat=parens]{subfig}
\usepackage{graphicx}
\usepackage{amsmath}
\usepackage{amssymb}
\usepackage{enumitem}
\usepackage{amsthm}
\usepackage{algorithm}
\usepackage{placeins}
\usepackage{xspace}
\usepackage{bbm}
\usepackage[acronym,hyperfirst=false]{glossaries}
\glsdisablehyper

\author{%
  Sebastian Bruch \\
  Northeastern University \\
  Boston, MA, USA \\
  \texttt{s.bruch@northeastern.edu}
}
\author{%
  Thomas Vecchiato \\
  University of Copenhagen \\
  Copenhagen, Denmark \\
  \texttt{thomas.vecchiato@di.ku.dk}
}

\input{macros}

\input{notes}
\icmltitlerunning{Neighborhood Stability as a Measure of Nearest Neighbor Searchability}

\begin{document}

\twocolumn[
    \icmltitle{Neighborhood Stability as a Measure of \\Nearest Neighbor Searchability}

    \begin{icmlauthorlist}
        \icmlauthor{Thomas Vecchiato}{x}
        \icmlauthor{Sebastian Bruch}{y}
    \end{icmlauthorlist}

    \icmlaffiliation{x}{University of Copenhagen, Copenhagen, Denmark}
    \icmlaffiliation{y}{Northeastern University, Boston, MA, USA}

    \icmlcorrespondingauthor{Thomas Vecchiato}{thomas.vecchiato@di.ku.dk}
    \icmlcorrespondingauthor{Sebastian Bruch}{s.bruch@northeastern.edu}

    \vskip 0.3in
]

\printAffiliationsAndNotice{}  

\input{sections/abstract}
\input{sections/introduction}
\input{sections/related-work}
\input{sections/method}
\input{sections/experiments}
\input{sections/conclusion}

\section*{Impact Statement}
This paper presents work whose goal is to advance the field of Machine
Learning. There are many potential societal consequences of our work, none
which we feel must be specifically highlighted here.

\bibliographystyle{icml2026}
\bibliography{paper}

\newpage
\appendix
\onecolumn

\input{appendix/flat-clustering-proof}
\input{appendix/datasets-ann}
\input{appendix/datasets-image}
\input{appendix/ann-clusters}
\input{appendix/multiprobe-ann}
\input{appendix/approximate-stability}
\input{appendix/point-nsm-full}
\input{appendix/point-nsm-misc}

\end{document}

%% file: macros.tex
\newtheorem{theorem}{Theorem}
\newtheorem*{theorem*}{Theorem}
\newtheorem{definition}{Definition}

\DeclareMathOperator*{\argmin}{arg\,min \quad}
\DeclareMathOperator*{\argmincompact}{arg\,min}
\DeclareMathOperator*{\argmaxcompact}{arg\,max}

\DeclareMathOperator*{\E}{\mathbb{E}}
\DeclareMathOperator*{\Prob}{\mathbb{P}}
\DeclareMathOperator*{\NN}{\textsc{Nn}}
\DeclareMathOperator*{\rNN}{\mathnormal{r}-\textsc{Nn}}

\newacronym{anns}{\textsc{Anns}}{Approximate Nearest Neighbor Search}
\newacronym{knn}{\textsc{$k$-Nn}}{$k$-Nearest Neighbor}
\newacronym{nsm}{\textsc{Nsm}}{Neighborhood Stability Measure}
\newacronym{setnsm}{\text{set-}\textsc{Nsm}}{Set-Neighborhood Stability Measure}
\newacronym{clusteringnsm}{\text{clustering-}\textsc{Nsm}}{Clustering-Neighborhood Stability Measure}
\newacronym{pointnsm}{\text{point-}\textsc{Nsm}}{Point-Neighborhood Stability Measure}

\newacronym{hnsw}{\textsc{Hnsw}}{Hierarchical Navigable Small World}

\newcommand{\nn}{\textsc{Nn}\xspace}
\newcommand{\nsm}{\textsc{Nsm}\xspace}
\newcommand{\ann}{\textsc{Ann}\xspace}

\newcommand{\dunn}{\textsc{Dunn}\xspace}
\newcommand{\db}{\textsc{Db}\xspace}

\newcommand{\nprobe}{\texttt{nprobe}\xspace}
\newcommand{\kmeans}{\textsc{KMeans}\xspace}

\newcommand{\textimage}{\textsc{Text2Image}\xspace}
\newcommand{\music}{\textsc{Music}\xspace}
\newcommand{\deep}{\textsc{DeepImage}\xspace}
\newcommand{\glove}{\textsc{GloVe}\xspace}
\newcommand{\msmarco}{\textsc{MsMarco}\xspace}
\newcommand{\nq}{\textsc{Nq}\xspace}
\newcommand{\fashion}{\textsc{Fashion-Mnist}\xspace}
\newcommand{\mnist}{\textsc{Mnist}\xspace}
\newcommand{\gist}{\textsc{Gist}\xspace}
\newcommand{\sift}{\textsc{Sift}\xspace}
\newcommand{\lastfm}{\textsc{LastFm}\xspace}

\newcommand{\cifarten}{\textsc{Cifar10}\xspace}
\newcommand{\cifarhundred}{\textsc{Cifar100}\xspace}
\newcommand{\pets}{\textsc{Pets}\xspace}
\newcommand{\caltech}{\textsc{Caltech101}\xspace}

\newcommand{\vit}{\textsc{ViT}\xspace}
\newcommand{\ressl}{\textsc{ReSsl}\xspace}

%% file: sections/abstract.tex
\begin{abstract}
Clustering-based \gls{anns} organizes a set of points into partitions, and searches only a few of them to find the nearest neighbors of a query. Despite its popularity, there are virtually no analytical tools to determine the suitability of clustering-based \gls{anns} for a given dataset---what we call ``searchability.'' To address that gap, we present two measures for flat clusterings of high-dimensional points in Euclidean space. First is \gls{clusteringnsm}, an internal measure of clustering quality---a function of a clustering of a dataset---that we show to be predictive of \gls{anns} accuracy. The second, \gls{pointnsm}, is a measure of clusterability---a function of the dataset itself---that is predictive of \gls{clusteringnsm}. The two together allow us to determine whether a dataset is searchable by clustering-based \gls{anns} given only the data points. Importantly, both are functions of nearest neighbor relationships between points, not distances, making them applicable to various distance functions including inner product.
\end{abstract}

\glsresetall

%% file: sections/introduction.tex
\section{Introduction}
\label{section:introduction}

Clustering has a wide array of applications including classification and \gls{anns}. In clustering-based \gls{anns}~\citep[Chapter~7]{vecchiato2024learningclusterrepresentativesapproximate,Bruch_2024}, a set of $m$ data points are first clustered into $\sqrt{m}$ clusters during the indexing phase~\citep[Section~5.1]{douze2024faiss}, and only a fraction of the resulting clusters are searched to find the $k$ nearest neighbors of a query. Search is naturally approximate and its efficacy is measured by recall or accuracy: if $S$ is the set of exact $k$ nearest neighbors and $S^\prime$ is the set returned by \gls{anns}, then accuracy is $\lvert S \cap S^\prime \rvert / k$.

Clustering-based \gls{anns} itself has received much attention in the literature~\citep{auvolat2015clustering,invertedMultiIndex,chierichetti2007clusterPruning,bruch2023bridging,douze2024faiss} and is a popular paradigm in industry.\footnote{See, for example, \url{https://turbopuffer.com/blog/turbopuffer}, \url{https://www.pinecone.io/blog/serverless-architecture/\#Pinecone-serverless-architecture}, and \url{https://research.google/blog/announcing-scann-efficient-vector-similarity-search/}.} Despite its wide adoption, determining whether clustering-based \gls{anns} is appropriate for a dataset, or pinpointing which clustering algorithm is the right choice, have remained empirical questions that require extensive experiments and large query sets.

We focus on this gap in our work and present measures that serve as the first tools to determine the \emph{searchability} of a dataset by clustering-based \gls{anns} given only the dataset itself. Our measures build on the intuition that a better clustering leads to higher \gls{anns} accuracy, and that datasets that can be partitioned into higher-quality clusters enjoy higher searchability. How do we quantify these notions?

\subsection{Clustering quality}

There is a suite of measures in the literature that assess the quality of a clustering. Clustering quality measures, as they are known, fall into one of \emph{internal} or \emph{external} categories. External measures use task-related labels to evaluate how well a given clustering models the data. Such measures answer questions such as: Are clusters \emph{pure} and contain only points with the same label? Does a clustering of data yield a higher \gls{anns} accuracy? That is the paradigm used by researchers and practitioners to evaluate clustering-based \gls{anns} on a dataset.

Internal quality measures, on the other hand, use no auxiliary information and as such are task-independent. They answer questions such as: Are clusters compact and well-separated? The unsupervised nature of internal measures makes them universally applicable and therefore more attractive, explaining, among other research, attempts to formalize this notion through an axiomatic approach~\citep{axioms-of-clustering,kleinberg-impossibility,axiomatic-graph-clustering}.

That distinction between external and internal measures is relevant to our study. In order to assess whether a clustering has higher quality and conclude that \gls{anns} over that clustering has higher accuracy, all without access to a query set, implies that we must look for an internal measure. As we show later, however, existing internal quality measures are not predictive of \gls{anns} accuracy.

\subsection{Clusterability}

We can go one step further and assess the \emph{clusterability} or \emph{clustering tendency} of a set of points~\citep{cluster-or-not-to-cluster,clusterability-ackerman09a}. Whereas clustering quality measures are functions of a particular clustering---that is, they take a clustering of a set of points and compute its quality---clusterability measures determine whether a set of points can be meaningfully clustered by any clustering algorithm. A higher clusterability implies that a clustering can emerge with an optimal loss or high clustering quality (according to some clustering quality measure).

Ideally, high (low) clusterability should be indicative of high (low) internal clustering quality of a clustering, which in turn should hint at high (low) external clustering quality of the same clustering given a task. When applied to \gls{anns}, that logic implies a highly clusterable dataset should ideally yield high \gls{anns} accuracy for any query distribution---rendering it, in our terminology, searchable.

That ideal, however, typically does not materialize. There is often a disconnect between clusterability measures, internal quality measures, and external ones. Furthermore, clusterability and internal quality measures often require the distance function to be nonnegative---a constraint that renders existing measures unsuitable in inner product-based \gls{anns}.

\subsection{Our contribution}

In this work, we focus exclusively on \emph{flat} clusterings of points, and consider its application to Euclidean space with three widely-used distance and similarity functions in \gls{anns}: Euclidean distance; cosine similarity; and, inner product. Within this context, we present an internal measure of clustering quality together with a measure of clusterability, addressing the gaps enumerated above.

Our internal measure of clustering quality is based on a modification of the notion of \gls{knn} \emph{consistency}~\citep{knn-consistency}. A set is said to be \gls{knn} consistent if for every point in the set, its $k$ nearest neighbors also belong to the set. We relax this definition by forgoing (binary) consistency and fixing $k=1$, and instead measuring the fraction of points whose $1$-\nn belongs to the set. We call this set-\gls{nsm}. Our clustering quality measure, clustering-\gls{nsm}, is a weighted average of the set-\gls{nsm} of all clusters in a clustering.

We prove that clustering-\gls{nsm} satisfies the axioms proposed by~\cite{axioms-of-clustering}. We also show empirically on a large number of datasets with a variety of distance functions that clustering-\gls{nsm} correlates strongly with \gls{anns} accuracy.

Our second and main contribution is a new clusterability measure that targets clustering-\gls{nsm}, so that higher clusterability implies a larger clustering-\gls{nsm}. Our clusterability measure is a statistic summarizing the distribution of point-{\gls{nsm}}s, where the point-\gls{nsm} of a point is the set-\gls{nsm} of its $r$ nearest neighbors. We prove the positive correlation between point-\gls{nsm} and clustering-\gls{nsm}, and show empirically that the relationship holds on a number of datasets with a mix of distance functions.

Our measures inch closer to the ideal we described earlier: A higher point-\gls{nsm} of a set of points (clusterability) suggests a higher clustering-\gls{nsm} of a flat clustering of the same points (internal quality measure), which suggests a higher \gls{anns} accuracy (external quality measure). Ours are the first measures to serve as a tool for quantifying the amenability of a set of points to clustering-based \gls{anns}, thereby closing an existing gap in that literature.


%% file: sections/related-work.tex
\section{Related Work}
\label{section:related-work}

Internal clustering quality measures have a long history in the literature. We focus on flat clusterings in our review of the literature because our work is limited to such algorithms in scope. We note, however, that there exist many such measures for hierarchical clustering~\citep{axiomatic-hierarchical-clustering}, linkage-based clustering~\citep{axioms-of-clustering}, and others.

The \dunn index~\citep{dunn1974} is the ratio of the minimal inter-cluster distance to the maximal intra-cluster distance. Specific instances of this measure are characterized by how they define distances. Typical formulations of intra-cluster distance of a cluster are: the maximum distance between its points, and mean distance between all pairs of points. The inter-cluster distance similarly has different flavors including the distance between centroids, distance between the closest points, and so on.

The measure of \cite{db-index} (\db) starts by calculating the following statistic for each cluster. For cluster $i$, it computes the maximal (over all $j\neq i$) ratio between the intra-cluster distances of clusters $i$ and $j$, and the inter-cluster distance between the two clusters. It then reports the mean of these statistics across all clusters. A well-separated cluster has a larger inter-cluster distance to other clusters relative to its intra-cluster distance, so that a lower \db is indicative of better clustering.

There exist other quality measures that are based on similar ideas but are computationally inefficient, such as the widely-used Silhouette coefficient~\citep{silhouette}. Relative and Additive Margin~\citep{axioms-of-clustering}, too, are highly inefficient with a complexity that is exponential in the number of clusters. They are based on the idea that in a better clustering, a point's distance to a representative from its cluster is smaller than its distance to the closest representative from another cluster. The measure reports the minimal mean ratio over all possible representative sets.

All these measures require the distance function to be nonnegative, preventing their application to distances that are functions of inner product. Additionally, they are highly sensitive to the value of the distance function itself, making them susceptible to distortion by the presence of outliers. In contrast, our proposed measure does not consider the magnitude of distances and is not affected by outliers.

Finally, the closest work to ours and a work that our method extends is the notion of \gls{knn} consistency developed by~\cite{knn-consistency}. As noted earlier, a set is considered \gls{knn} consistent if the $k$ nearest neighbors of every point in the set also belong to the set. A clustering measure is then defined as the proportion of \gls{knn} consistent clusters. While~\cite{knn-consistency} allow for what they call ``fractional'' \gls{knn} consistency, they do not analyze the method theoretically and on the tasks we evaluate in this work. Additionally, we extend their research to introduce a measure of clusterability.

\medskip

Clusterability, too, has been extensively studied in the literature. \cite{cluster-or-not-to-cluster} provide a comprehensive review of the literature and offer an extensive comparative, empirical study of the existing methods on synthetic and real datasets. \cite{clusterability-ackerman09a} formalize the problem of clusterability and theoretically analyze existing measures. We refer the interested reader to these two works for a more detailed overview of the literature.

%% file: sections/method.tex
\section{Neighborhood Stability Measure}
\label{section:method}

We present a detailed description of our proposed methodology. We begin with key concepts and conclude with theoretical results for the proposed clustering quality and clusterability measures.

\subsection{Definitions}
\label{section:nsm:definitions}

We write $(\mathcal{X}, \delta)$ to describe a finite set $\mathcal{X} \subset \mathbb{R}^d$ equipped with distance function $\delta: \mathbb{R}^d \times \mathbb{R}^d \rightarrow \mathbb{R}$. We denote by $\rNN_{(\mathcal{X}, \delta)}(u)$, $1 \leq r \in \mathbb{Z}$, the $r$ nearest neighbors of the point $u$ in $\mathcal{X}$ with distance function $\delta$: $\rNN_{(\mathcal{X}, \delta)}(u) = \argmincompact^{(r)}_{v \in \mathcal{X}, v \neq u} \delta(u, v)$. We drop the subscript and write $\rNN(\cdot)$ when the set and distance function are clear from the context. We write $\NN(\cdot)$ when $r=1$.

Writing $\lvert \cdot \rvert$ as the size of a set, we define the following concept:

\begin{definition}[$\alpha$-Stability and \glsentrylong{setnsm}]
    \glsunset{setnsm}
    \label{definition:stability}
    A set $S$ is $\alpha$-stable, $\alpha \in [0, 1]$, in $(\mathcal{X}, \delta)$ if $S \subseteq \mathcal{X}$ and $\lvert \{ u \in S : \NN_{(\mathcal{X}, \delta)}(u) \in S \} \rvert = \alpha \lvert S \rvert$. We call $\alpha$ its \gls{setnsm}, write $\gls{setnsm}_{(\mathcal{X}, \delta)}(S) = \alpha$, and drop the subscript if $(\mathcal{X}, \delta)$ is clear from the context.
\end{definition}

We call $S$ \emph{stable} if it is $1$-stable and \emph{unstable} if it is $0$-stable. Intuitively, a stable neighborhood forms a tightly-knit cluster that is isolated from the rest of $\mathcal{X}$.

Let us elaborate the connection between $\alpha$-stability and the related notion of \gls{knn} consistency~\citep{knn-consistency}. A set $S$ is said to be \gls{knn} consistent if the $k$ nearest neighbors of every point in $S$ also belong to $S$, and inconsistent otherwise. In defining $\alpha$-stability, we relax the definition so that it becomes non-binary, and restrict its scope to $k=1$. In the terminology of this work then, a \gls{knn} consistent set is stable, but a set that is $1$-$\nn$-inconsistent can be $\alpha$-stable for some $\alpha \in [0, 1)$.

We now extend the notion of \gls{setnsm} to a clustering of $\mathcal{X}$.

\begin{definition}[\glsentrylong{clusteringnsm}]
    \glsunset{clusteringnsm}
    \label{definition:clustering-nsm}
    For a clustering $C = \{ C_i \}_{i=1}^L$ of $\mathcal{X}$ such that $\cup_{i=1}^L C_i = \mathcal{X}$ and a set of positive real weights $\omega = \{ \omega_i \}_{i=1}^L$, we define the clustering-\gls{nsm} of $C$ as follows: $\gls{clusteringnsm}_\delta(C;\; \omega) = \frac{1}{\sum \omega_i} \sum_{i = 1}^L \omega_i \;\gls{setnsm}(C_i)$. We drop the subscript if $\delta$ is clear from the context.
\end{definition}

In effect, the \gls{clusteringnsm} of $C$ is the weighted mean of the \gls{setnsm} of the clusters that make up $C$. In our experiments to be presented later, we only consider weights that are proportional to the size of each cluster, $\omega_i = \lvert C_i \rvert$, and leave an exploration of other weight schemes to future work.

We claim and empirically show that the \gls{nsm} of a clustering is an internal clustering quality measure of flat clustering algorithms. But we also present a measure of \emph{clusterability} of a set such that a highly \emph{clusterable} set will have high \gls{clusteringnsm}. To that end, we define the following:

\begin{definition}[\glsentrylong{pointnsm}]
    \glsunset{pointnsm}
    \label{definition:point-nsm}
    A point $u \in \mathcal{X}$ has \gls{pointnsm} $\alpha$ with radius $r$ if $\gls{setnsm}((r-1)\text{-}\NN(u) \cup \{ u \}) = \alpha$. We denote it by $\gls{pointnsm}_{(\mathcal{X}, \delta)}(u;\; r)$, but drop the subscript if $(\mathcal{X}, \delta)$ is clear from the context.
\end{definition}

In other words, the \gls{pointnsm} of $u$ is the $\gls{setnsm}$ of the set consisting of $u$ and its $(r-1)$ nearest neighbors in $(\mathcal{X}, \delta)$.

\subsection{Clustering-\gls{nsm} as an internal measure of clustering quality}
\label{section:nsm:cqm}

Our first result shows that $\gls{clusteringnsm}$ satisfies the \citet{axioms-of-clustering} axioms, and as such is a valid internal measure of clustering quality within that framework.

\begin{theorem}
    \label{theorem:nsm-as-cqm}
    For a clustering $C$ of $(\mathcal{X}, \delta)$ and a fixed set of weights $\omega$, $\gls{clusteringnsm}(C;\; \omega)$ satisfies the four axioms of~\citet{axioms-of-clustering}: consistency, richness, scale invariance, and isomorphism invariance. As such, $\gls{clusteringnsm}(C; \omega)$ is a measure of clustering quality.
\end{theorem}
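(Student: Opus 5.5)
We verify the four axioms of \citet{axioms-of-clustering} one at a time, using the fact that the \gls{clusteringnsm} depends on $\delta$ only through the nearest-neighbor map $u \mapsto \NN_{(\mathcal{X}, \delta)}(u)$. Concretely, define the \emph{stable count} of a clustering as the number of points whose nearest neighbor lies in the same cluster. Then each $\gls{setnsm}(C_i)$ is the fraction of points of $C_i$ with $\NN(u) \in C_i$, and $\gls{clusteringnsm}(C;\omega)$ is a fixed positive combination of these fractions. I first record this reduction as a lemma: if two distance functions induce the same map $\NN$ on $\mathcal{X}$, they give identical \gls{clusteringnsm} for every clustering and every $\omega$.

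\emph{Scale invariance} and \emph{isomorphism invariance} follow at once from this lemma. For $\lambda > 0$, the function $\lambda\delta$ has the same argmin as $\delta$, so $\NN$ is unchanged. For isomorphism invariance, let $\phi$ be a distance-preserving bijection of $\mathcal{X}$. Then $\NN_{\delta}(\phi(u)) = \phi(\NN_{\delta}(u))$, so $u$ and $\NN(u)$ lie in the same cluster $C_i$ exactly when $\phi(u)$ and its nearest neighbor lie in $\phi(C_i)$. Hence the \gls{setnsm} of each cluster is preserved, with weights attached to the corresponding clusters. Ties in the argmin are handled by a fixed tie-breaking rule that is carried along by $\phi$ and by scaling.

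\emph{Consistency.} Suppose $\delta'$ is a $C$-consistent variant of $\delta$: $\delta'(u,v) \le \delta(u,v)$ when $u,v$ share a cluster, and $\delta'(u,v) \ge \delta(u,v)$ otherwise. We must show $\gls{clusteringnsm}_{\delta'}(C;\omega) \ge \gls{clusteringnsm}_{\delta}(C;\omega)$. It suffices to show that for every $u \in C_i$, $\NN_\delta(u) \in C_i$ implies $\NN_{\delta'}(u) \in C_i$, since each $\gls{setnsm}(C_i)$ can then only grow and the weights are fixed and positive. Let $v = \NN_\delta(u) \in C_i$ and let $w \notin C_i$. Then
\[
\delta'(u,w) \ge \delta(u,w) \ge \delta(u,v) \ge \delta'(u,v),
\]
so no out-of-cluster point beats $v$ under $\delta'$, and the new nearest neighbor stays in $C_i$. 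The delicate step is equality: if equality holds throughout, a tie-break could pick $w$. I would handle this by requiring ties to be broken in favor of the previous minimizer or by a fixed ordering. Alternatively, I would assume $\delta$ is injective on pairs, which is generic and standard in this framework.

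\emph{Richness.} For any partition $C$ of $\mathcal{X}$ we need a $\delta$ for which $C$ maximizes the measure, i.e.\ attains value $1$. Take $\delta(u,v) = 1$ if $u,v$ share a cluster and $\delta(u,v) = 2$ otherwise, perturbed slightly to break ties if desired. Every point whose cluster has at least two elements then has its nearest neighbor inside the cluster. The main obstacle is singleton clusters, whose single point necessarily has its nearest neighbor outside. I would address this either by adopting the convention that a singleton's \gls{setnsm} is the maximum attainable value, or by stating richness relative to partitions without singletons. More carefully, I would show that $C$ still attains the maximum possible \gls{clusteringnsm} over all clusterings with the same cluster-size profile, with the fixed weights matched to clusters. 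I would try the convention route first and note it explicitly.
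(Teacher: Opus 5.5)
Your proposal follows the paper's proof axiom by axiom---scale and isomorphism invariance because the nearest-neighbor map is unchanged, consistency because stable points stay stable (the paper only asserts this ``trivially''; your inequality chain is the missing justification), and richness via a two-level distance (the paper uses $1$ versus $10$)---and it is, if anything, more careful than the paper. The caveats you raise are real and the paper's proof overlooks them: a singleton cluster always has \gls{setnsm} $0$ while $\{\mathcal{X}\}$ always scores $1$, so richness genuinely fails for clusterings with singletons and must be restricted to singleton-free $C$ (adopting a new convention in Definition~\ref{definition:stability} would change the measure itself), and even then $C$ is only \emph{a} maximizer, because every coarsening of it also scores $1$; for ties, simply assume unique nearest neighbors as the paper implicitly does (under $\delta$ this already makes your consistency chain strict), since tie-breaking by a fixed ordering of $\mathcal{X}$ need not commute with $\phi$, and ``favor the previous minimizer'' makes the value depend on $\delta$ rather than on $\delta'$ alone.
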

\begin{proof}
    Let us consider each property separately. In what follows, we write $u \sim_C v$ to indicate that there exists a cluster $C_i$ in the clustering $C = \{ C_i \}_{i=1}^L$ such that $u, v \in C_i$. We denote by $u {\not\sim}_C v$ the alternative case, that $u \in C_i$ and $v \in C_j$ for $i \neq j$.

    \textbf{Consistency}:
    Given a clustering $C$ over $(\mathcal{X}, \delta)$, a distance function $\delta^\prime$ is $C$-consistent with $\delta$ if $\delta^\prime(u, v) \leq \delta(u, v)$ for all $u \sim_C v$, and $\delta^\prime(u, v) \geq \delta(u, v)$ for all $u {\not\sim}_C v$. The \emph{consistency} axiom applied to \gls{clusteringnsm} requires that, $\gls{clusteringnsm}_{\delta^\prime}(C;\; \omega) \geq \gls{clusteringnsm}_\delta(C;\; \omega)$ whenever $\delta^\prime$ is $C$-consistent with $\delta$. That holds trivially because for each cluster $C_i \in C$, $\gls{setnsm}_{(\mathcal{X}, \delta^\prime)}(C_i) \geq \gls{setnsm}_{(\mathcal{X}, \delta)}(C_i)$.

    \textbf{Richness}: The \emph{richness} axiom requires that for any non-trivial clustering $C$ of $\mathcal{X}$, there exist a $\delta$ such that $C = \argmaxcompact_{C^\prime \in \mathcal{C}} \gls{clusteringnsm}(C^\prime;\; \omega)$, where $\mathcal{C}$ is the set of all possible clusterings. This is trivially the case by defining $\delta(u, v) = 1$ for all $u \sim_C v$, and $\delta(u, v) = 10$ for all $u {\not\sim}_C v$.

    \textbf{Scale invariance}: This axiom requires that $\gls{clusteringnsm}_\delta(C;\; \omega) = \gls{clusteringnsm}_{\lambda \delta}(C;\; \omega)$ for all $\lambda > 0$, where $\lambda\delta(u, v) = \lambda \cdot \delta(u, v)$. That holds because $\NN_{(\mathcal{X}, \lambda \delta)}(u) = \NN_{(\mathcal{X}, \delta)}(u)$ for all $u \in \mathcal{X}$, so that $\gls{setnsm}_{(\mathcal{X}, \lambda \delta)}(C_i) = \gls{setnsm}_{(\mathcal{X}, \delta)}(C_i)$ for all $C_i \in C$.

    \textbf{Isomorphism invariance}: Two clusterings $C$ and $C^\prime$ over $(\mathcal{X}, \delta)$ are said to be isomorphic, denoted $C \approx_\delta C^\prime$, if there exists a distance-preserving isomorphism $\phi$ such that for all $u, v \in \mathcal{X}$, $u \sim_C y$ iff $\phi(u) \sim_{C^\prime} \phi(v)$. The \emph{isomorphism invariance} property requires that $\gls{clusteringnsm}(C;\; \omega) = \gls{clusteringnsm}(C^\prime;\; \omega)$ for all $C \approx_\delta C^\prime$. But that holds because $\phi$ is distance-preserving, implying that $\NN_{(\mathcal{X}, \delta)}(u) = \NN_{(\phi(\mathcal{X}), \delta)}(\phi(u))$ for all $u \in \mathcal{X}$, where we used the short-hand $\phi(\mathcal{X}) = \{ \phi(u);\; u \in \mathcal{X} \}$.
\end{proof}

We close this section by a note on the efficiency of \gls{clusteringnsm}. Given the nearest neighbors of all points in $\mathcal{X}$, \gls{clusteringnsm} can be computed in time linear in the size of $\mathcal{X}$. Importantly, $\NN(u)$ for any point $u \in \mathcal{X}$ is independent of the clustering $C$ and can be computed upfront and reused to evaluate any clustering of the set.

For large sets, the complexity of finding the nearest neighbor of each point in a set of $n$ points, which is $\mathcal{O}(n^2)$, can be improved to $\mathcal{O}(n^{3/2})$ or $\mathcal{O}(n\log n)$ if one replaces the exact nearest neighbor subroutine with its approximate variant~\citep{Bruch_2024} in Definition~\ref{definition:stability} to compute stability measures. We will later show that, in practice, such approximation makes no perceptible difference in the \gls{clusteringnsm}.

\subsection{Point-\gls{nsm} as a measure of clusterability}
\label{section:nsm:clusterability}

We now claim that \gls{pointnsm} is a measure of clusterability, so that a set whose clusterability is higher according to \gls{pointnsm} yields a clustering with a larger \gls{clusteringnsm}. Intuitively, we argue that, if the distribution of \gls{pointnsm} is concentrated around a large value, then its \gls{clusteringnsm} is also large. We prove that claim for \emph{spherical} flat clusterings in the next result, and extend the result to the more general case in Appendix~\ref{appendix:theorem-extension}.

\begin{theorem}
    \label{theorem:nsm-as-clusterability}
    Consider $(\mathcal{X}, \delta)$ in $\mathbb{R}^d$ where $\lvert \mathcal{X} \rvert = L \cdot r$ for integers $L, r > 1$. Suppose that $\mathcal{X}$ can be clustered into $L$ $\delta$-balls each centered at a point $u \in \mathcal{X}$ and consisting of $r$ points. Suppose that each point $u$ is equally likely to be at the center of a ball. Denote the set of all such clusterings by $\mathcal{C}$. If $\omega_i = r$, then $\E_{C \sim \mathcal{C}}[\gls{clusteringnsm}(C;\; \omega)] = \E_{u \sim \mathcal{X}}[\gls{pointnsm}(u;\; r)]$. Furthermore, $\gls{clusteringnsm}(C;\; \omega) \leq \E[\gls{pointnsm}(u;\;r)] - \sqrt{\frac{\log(1/\epsilon)}{2L}}$ with probability at most $\epsilon$.
\end{theorem}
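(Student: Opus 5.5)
The plan is to reduce both claims to the observation that, under the hypotheses, every cluster of a clustering in $\mathcal{C}$ coincides with the neighborhood used to define \gls{pointnsm} at its center. Take a ball centered at $u \in \mathcal{X}$ that contains exactly $r$ points of $\mathcal{X}$. It contains $u$ together with the $r-1$ points closest to $u$, so it equals $(r-1)\text{-}\NN(u) \cup \{u\}$; I assume ties are broken consistently, as the paper's $\argmincompact^{(r)}$ implicitly does. Definition~\ref{definition:point-nsm} then gives $\gls{setnsm}(C_i) = \gls{pointnsm}(u_i;\; r)$ for every cluster $C_i$ with center $u_i$. Because $\omega_i = r$ for all $i$, Definition~\ref{definition:clustering-nsm} collapses to an unweighted average:
\[
\gls{clusteringnsm}(C;\;\omega) = \frac{1}{L}\sum_{i=1}^L \gls{pointnsm}(u_i;\; r).
\]

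For the expectation, I take expectations over $C \sim \mathcal{C}$ and use linearity. The hypothesis that each point is equally likely to be a center means the marginal distribution of each $u_i$, or equivalently of a center chosen uniformly from the $L$ centers, is uniform on $\mathcal{X}$. This gives $\E[\gls{pointnsm}(u_i;\;r)] = \E_{u\sim\mathcal{X}}[\gls{pointnsm}(u;\;r)]$, and averaging over $i$ yields the first claim. I will state explicitly that I read the symmetry assumption as uniformity of each center's marginal. That is what the argument needs, and it is at least as weak as a uniform distribution over $\mathcal{C}$ when the centers are exchangeable.

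For the tail bound, the $\gls{pointnsm}$ values are bounded in $[0,1]$, so I apply Hoeffding's one-sided inequality to their average:
\[
\Prob\Big[\tfrac1L\textstyle\sum_i \gls{pointnsm}(u_i;\;r) \le \E[\gls{pointnsm}(u;\;r)] - t\Big] \le \exp(-2Lt^2).
\]
Setting $t=\sqrt{\log(1/\epsilon)/(2L)}$ makes the right-hand side equal to $\epsilon$, which is exactly the claimed bound.

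The main obstacle is justifying Hoeffding. The centers of a single clustering are not independent, since the balls must partition $\mathcal{X}$, which is closer to sampling without replacement. I would first treat the $u_i$ as independent draws under the paper's modeling assumption, which is likely the authors' intent. If that is unacceptable, I would instead invoke Hoeffding's result that sampling without replacement from a finite population obeys the same exponential bound, or a bounded-differences (McDiarmid) argument on the random choice of centers, where each center changes the average by at most $1/L$. Either route needs the distribution over $\mathcal{C}$ to be exchangeable, which the symmetry assumption supports.
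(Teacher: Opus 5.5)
Your treatment of the expectation identity is correct and matches the paper's computation. The paper sums indicator-weighted point-\nsm values over all $u\in\mathcal{X}$ and uses $\Prob[B_r(u)\in C]=1/r$, which is equivalent to your observation that a uniformly chosen center of a random $C$ is uniform on $\mathcal{X}$. The genuine gap is the tail bound, and the paper's proof has the same gap. The symmetry hypothesis fixes only the marginals $\Prob[u \text{ is a center}]=1/r$. That is all linearity of expectation needs, but a Hoeffding-type bound needs control of the joint law of the $L$ centers, and the hypotheses provide none. None of your routes supplies it. Independent centers are incompatible with the balls partitioning $\mathcal{X}$. Hoeffding's sampling-without-replacement inequality requires the center set to be a uniformly random $L$-subset of $\mathcal{X}$, which is impossible here because two points of the same ball can never both be centers. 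McDiarmid's inequality needs independent inputs. Exchangeability does not rescue it either. Exchangeability of your labelled centers $u_1,\dots,u_L$ is automatic under a random labelling but gives no concentration, since an exchangeable sequence can be perfectly correlated. Exchangeability of the indicators $\mathbbm{1}_{B_r(u)\in C}$ over $u\in\mathcal{X}$, which is what the paper asserts, would force the uniform-subset model just ruled out.

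In fact the second claim is false under the stated hypotheses. Take $r=3$ and $L\ge 5$. Place $n=3L$ points $x_0,\dots,x_{n-1}$ in cyclic order on a circle in $\mathbb{R}^2$ with Euclidean $\delta$, so that the arc from $x_j$ to $x_{j+1}$ has length $1.0$, $1.4$ or $1.1$ according as $j\equiv 0,1,2 \pmod 3$. Adjacent points are at arc distance at most $1.4$ and non-adjacent ones at least $2.1$, and chord length increases with arc length. So every ball is a consecutive triple $\{x_{j-1},x_j,x_{j+1}\}$, and $\NN(x_j)$ is the neighbour across the shorter of its two gaps. Hence $\mathcal{C}$ consists of exactly three clusterings, whose center sets are the three residue classes mod $3$. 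Each point is a center in exactly one of them, so the uniform distribution on $\mathcal{C}$ is the only one meeting the hypothesis. A direct check gives point-\nsm equal to $1$, $2/3$, $1/3$ on the classes $0,1,2$. The three clusterings therefore have clustering-\nsm $1$, $2/3$, $1/3$, with mean $2/3$ as the first claim predicts. With $\epsilon=e^{-2L/9}$, i.e.\ $t=1/3$, the event that the clustering-\nsm is at most $2/3-1/3$ has probability $1/3$, whereas $\epsilon\le e^{-10/9}<1/3$. So no sharper argument can close this step. The concentration statement needs an extra modelling assumption on the joint law of the centers, for instance that the point-\nsm values of the $L$ centers behave like a sample with or without replacement from the population of point-\nsm values, where Hoeffding's original inequalities apply. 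Only the expectation identity follows from the hypotheses as stated.
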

\begin{proof}
    For each $u \in \mathcal{X}$, define $B_r(u)$ as the ball centered at $u$ with radius $\max_{v \in (r-1)\text{-}\nn(u)} \delta(u, v)$. From all $L \cdot r$ such balls, find all non-overlapping subsets that cover $\mathcal{X}$. The set of covers make up $\mathcal{C}$. Each ball in $C \sim \mathcal{C}$ has \gls{setnsm} equal to the \gls{pointnsm} of the point at its center: $\gls{setnsm}(B_r(u)) = \gls{pointnsm}(u;\;r)$ for all $u$ such that $B_r(u) \in C$. Now, consider:
    \begin{align*}
        \E_{C \sim \mathcal{C}}&[ \gls{clusteringnsm}(C;\; \omega)] \\
        &= \E[ \frac{1}{L} \sum_{B_r(u) \in C} \gls{setnsm}(B_r(u)) ] \\
        &= \frac{1}{L} \E[ \sum_{u:\;B_r(u) \in C} \gls{pointnsm}(u;\; r) ] \\
        &= \frac{1}{L} \E[ \sum_{u \in \mathcal{X}} \mathbbm{1}_{B_r(u) \in C} \gls{pointnsm}(u;\; r) ] \\
        &= \frac{1}{L} \sum_{u \in \mathcal{X}} \E[\gls{pointnsm}(u;\; r) \mathbbm{1}_{B_r(u) \in C}]\\
        &= \frac{1}{L\cdot r} \sum_{u \in \mathcal{X}} \gls{pointnsm}(u;\; r)\\
        &= \E_{u \sim \mathcal{X}}[\gls{pointnsm}(u;\; r)].
    \end{align*}

    Notice that, while the indicator random variables $\mathbbm{1}_{B_r(u) \in C}$ are dependent on each other (because choosing one point changes the probability of others being selected), they are exchangeable. That is because, as long as $L$ balls are selected, it does not matter which individual balls those are. As such, Hoeffding's bounds for iid variables hold~\citep{barber2025hoeffdingbernsteininequalitiesweighted}, giving the required bound.
\end{proof}

We note that, while the conditions of this result are rather strong, as we demonstrate later, the result holds even when these conditions may not be fully satisfied. We also highlight that $\delta$ can be any distance function for which a clustering of the data is well-defined. Notably, this includes Euclidean and angular distances, but not inner product. As we see later, however, the result holds empirically even for inner product.

%% file: sections/experiments.tex
\section{Empirical Evaluation of \gls{clusteringnsm}}
\label{section:experiments:cqm}

We now put Theorem~\ref{theorem:nsm-as-cqm} to the test by applying it to clustering-based \gls{anns}. To show the generalizability of our measures, we also apply them to the task of image clustering. For each task, we describe the datasets and the task, and discuss the evaluation protocol, our hypothesis, and empirical findings. All our code and data required to reproduce our experiments can be found in a double-blind review-compliant Git repository.\footnote{\url{https://anonymous.4open.science/r/nsm-5DEC/README.md}}

In addition to \gls{clusteringnsm}, we evaluate the \dunn index~\citep{dunn1974} as well as the Davies-Bouldin (\db) index~\citep{db-index} as popular representatives of internal measures of clustering quality. However, to make the \db index more directly comparable with \gls{clusteringnsm}, we introduce a weighted variant as follows:
\begin{equation}
    \label{equation:db-index}
    \texttt{DB}_\delta(C; \omega) = \frac{1}{\sum \omega_i} \sum_{i=1}^L \omega_i \max_{j \neq i} \big( \frac{\sigma_i + \sigma_j}{\delta(\mu_i, \mu_j)} \big),
\end{equation}
where $\delta$ is a \emph{nonnegative} distance function (typically, Euclidean), and $\sigma_i$ the average distance of points in $C_i$ to their mean $\mu_i$. When $\omega_i = 1$ for all $i$'s, the weighted \db coincides with its original definition, but we also let $\omega_i = \lvert C_i \rvert$ to ensure that the weighting of \gls{clusteringnsm} is not the sole factor responsible for its performance compared with vanilla \db.

Finally, we leave out other internal measures of clustering quality (e.g., the Silhouette coefficient and Relative Margin) as they often do not easily scale to the dataset sizes we consider in our experiments.

\subsection{Clustering-based \gls{anns}}
\label{section:experiments:cqm:ann}

\textbf{Task}: As noted earlier, nearest neighbor (\nn) search solves the following problem for an arbitrary query vector, $q \in \mathbb{R}^d$:
\begin{equation}
    \label{equation:ann-search}
    \argmin^{(k)}_{u \in \mathcal{X}} \delta(q, u),
\end{equation}
where the superscript $(k)$ indicates that $\argmincompact$ returns the $k$ minimizers of its argument. Different instances of this problem are characterized by the distance (or similarity) function. Solving the exact \nn search problem can be computationally expensive, however. As such, one often resorts to an approximate variant of the problem known as \gls{anns}~\citep{Bruch_2024}, whose accuracy is quantified as follows: if $S^\prime$ is the set of $k$ points returned by an \gls{anns} algorithm and $S$ is the exact set of $k$ points from an \nn search algorithm, accuracy is simply $\lvert S \cap S^\prime \rvert / k$.

As we noted before, in clustering-based \gls{anns}---also known as Inverted File (IVF)~\citep{pq}---a dataset is first partitioned into $L$ shards using a clustering algorithm such as \kmeans. Search involves two stages. First, the algorithm \emph{routes} the query $q$ to $\ell \ll L$ shards, where $\ell$ is a hyperparameter. Typically, these are the $\ell$ shards whose centroids minimize $\delta(q, \cdot)$, though other more complex routing subroutines exist~\citep{scann,bruch2024optimist,vecchiato2024learningtorank,gottesburen2024unleashinggraphpartition}. In a subsequent stage, \nn search (or \gls{anns}) is performed over just the $\ell$ shards. In this formulation, $\ell$ (also called \nprobe) trades off accuracy for speed.

\noindent \textbf{Datasets}: We use a suite of benchmark datasets from the \gls{anns} literature and challenges~\citep{pmlr-v176-simhadri22a,simhadri2024resultsbigannneurips23}. The datasets cover all three common distance or similarity functions (Euclidean, cosine similarity, and inner product), as well as a range of collection sizes and dimensionalities.

We give a complete description of the datasets in Appendix~\ref{appendix:datasets-ann}. What is most relevant is the distance function, and size and dimensionality of the datasets which we note briefly here. For Euclidean search: \mnist and \fashion ($60{,}000$ points, $d=784$); \gist ($100{,}000$, $d=960$); and, \sift ($1$m, $d=128$). For Cosine Similarity search: \deep ($10$m, $d=96$); \glove ($1.2$m, $d=200$); \lastfm ($300{,}000$, $d=65$); \msmarco ($8.8$m, $d=384$); and, \nq ($2.7$m, $d=1{,}536$). For Maximum Inner Product Search: \textimage ($10$m, $d=200$); and, \music ($1$m, $d=100$).

\noindent \textbf{Evaluation protocol}: We cluster a given dataset $\mathcal{X}$ into $L = \sqrt{\lvert \mathcal{X} \rvert}$ clusters to prepare for clustering-based \gls{anns} using a number of clustering algorithms to be described shortly. In Appendix~\ref{appendix:ann-clusters} we also experiment with $L=t\sqrt{\lvert \mathcal{X} \rvert}$ for $t \in \{ 1/4, 1/2 \}$ for completeness. We fix $\ell=1$ and report results for $k \in \{ 5, 10 \}$---we study the effect of $\ell$ in Appendix~\ref{appendix:multiprobe-ann}. For each $k$ and a clustering $C$, we measure \gls{anns} accuracy, and report the Spearman's rank correlation coefficient between accuracy and the clustering quality measures.

As for clustering, we use standard and spherical variants of \kmeans. We run each for $\{5, 10, 20, 40 \}$ iterations, resulting in a total of $8$ clusterings with more iterations leading to better quality. We note that, due to scalability issues, we are unable to conduct experiments with other flat clustering algorithms such as DBScan~\citep{dbscan} or spectral clustering~\citep{spectral-clustering}.

\begin{figure*}[t]
\begin{center}
\centerline{
    \subfloat[Top-$5$]{
        \includegraphics[width=0.4\linewidth]{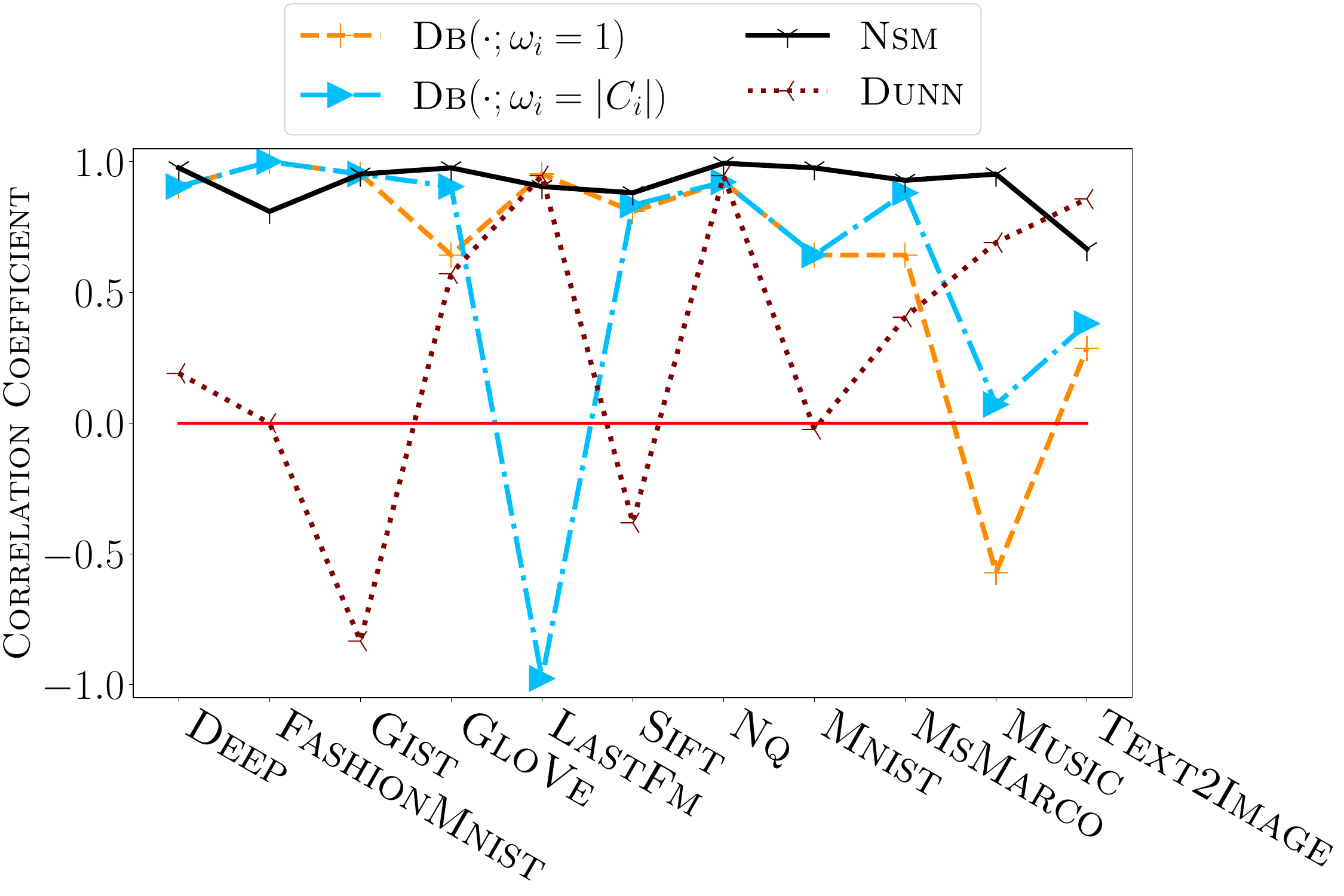}
    }
    \subfloat[Top-$10$]{
        \includegraphics[width=0.4\linewidth]{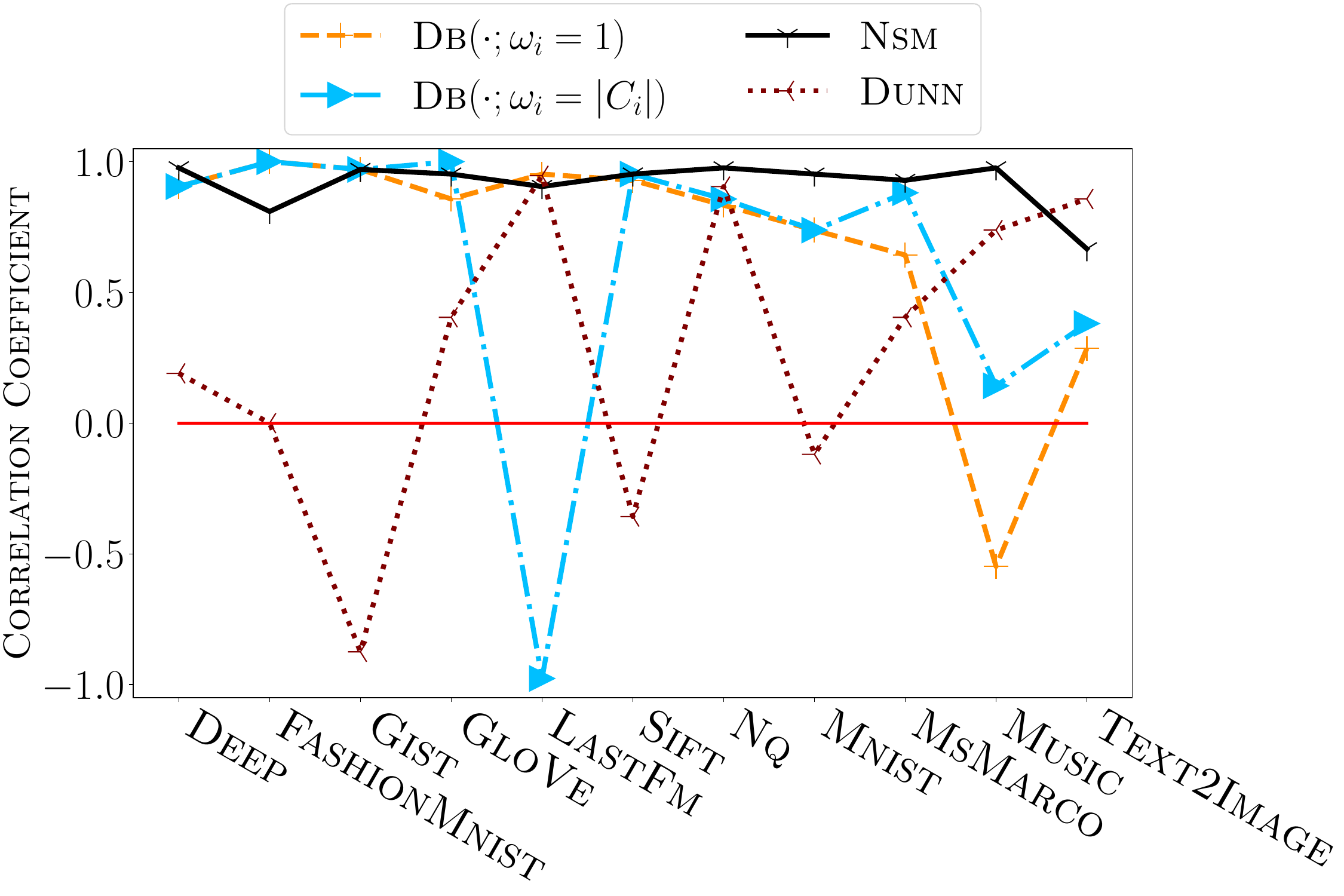}
    }
}
\caption{Spearman's correlation coefficient between clustering quality and top-$k$ \gls{anns} accuracy. For \db, we present the negated coefficient because smaller values indicate better clustering quality.}
\label{figure:experiments:cqm:ann}
\end{center}
\end{figure*}

\noindent \textbf{Hypothesis}: For a fixed algorithm, a larger number of iterations leads to better clustering, which in turn improves \gls{anns} accuracy. Furthermore, as evidenced by past research~\citep{bruch2024optimist}, spherical \kmeans outperforms standard \kmeans for certain distance functions. Our expectation is that internal quality measures should strongly correlate with \gls{anns} accuracy, given the $8$ accuracy and quality measurement pairs, with the null hypothesis that no strong correlation can be observed.

\noindent \textbf{Results}:  Figure~\ref{figure:experiments:cqm:ann} renders the results of our experiments. The plots illustrate Spearman's correlation coefficient between \gls{anns} accuracy for top-$k$ search ($k \in \{ 5, 10\}$) and clustering quality measures.

Observe that the \dunn index is not predictive of \gls{anns} accuracy, and swings wildly from one dataset to another. The two variants of the \db index show relatively stronger correlation, but weaken for several datasets. That is particularly the case when \gls{anns} is by inner product; in fact, on the \music dataset, there is (weak) anti-correlation between \db and accuracy.

Clustering-\textsc{Nsm} (denoted \nsm in the figure for brevity) shows a stronger correlation, even on inner product tasks. We note that, in Figure~\ref{figure:experiments:cqm:ann}, we use exact nearest neighbors in Definition~\ref{definition:stability} to calculate \gls{clusteringnsm}. In Figure~\ref{figure:appendix:experiments:cqm:ann} in Appendix~\ref{appendix:approximate-stability}, we present results for a configuration where we use approximate nearest neighbors instead.

The empirical observations provide sufficient evidence that \gls{clusteringnsm} is a strong predictor of \gls{anns} accuracy. Given the statistically significant differences at $k=10$ ($p$-value being smaller than $0.001$), we can safely reject the null hypothesis. The same cannot be concluded for other measures.

\begin{figure*}[t]
\begin{center}
\centerline{
    \subfloat[Mutual Information]{
        \includegraphics[width=0.4\linewidth]{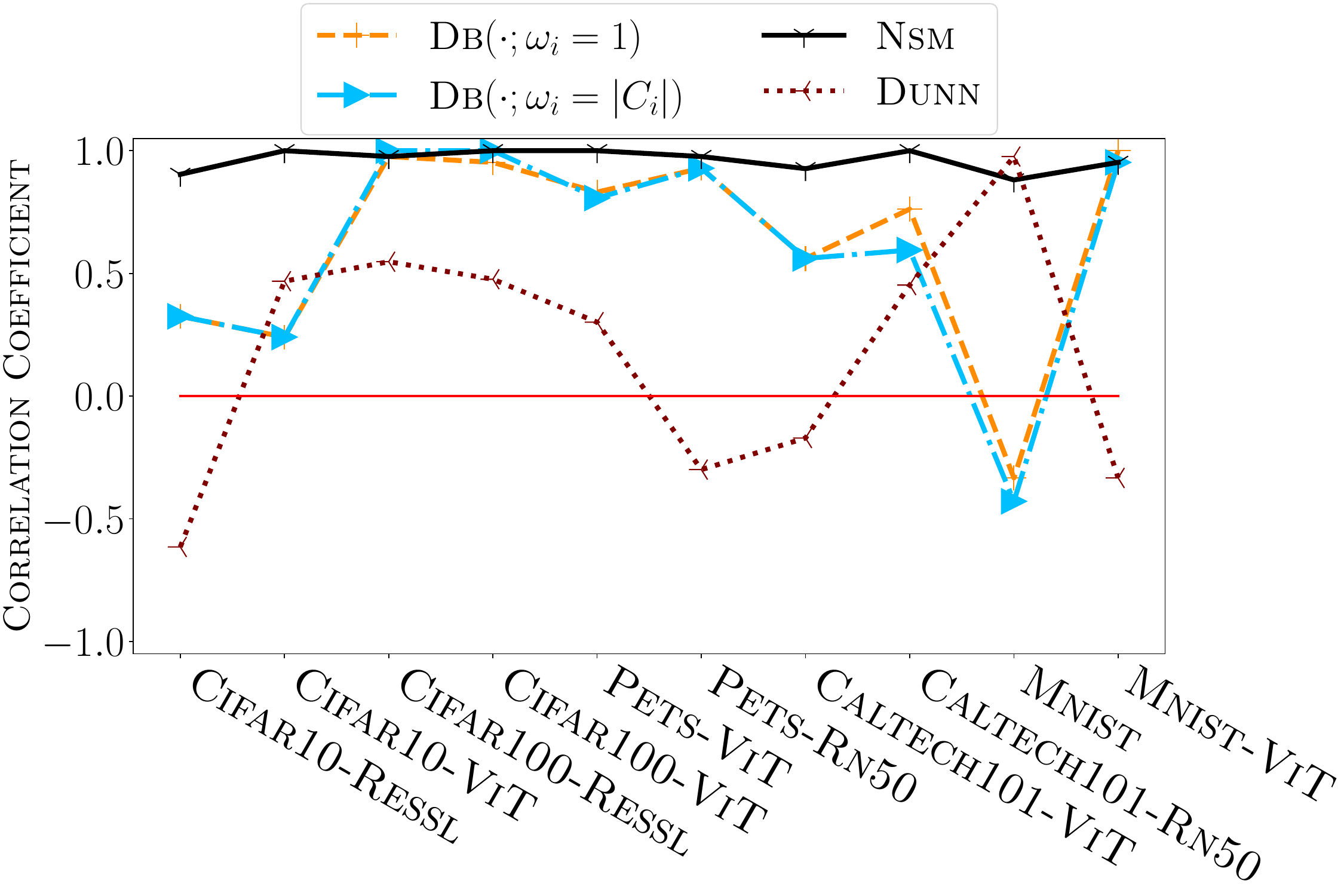}
    }
    \subfloat[Homogeneity]{
        \includegraphics[width=0.4\linewidth]{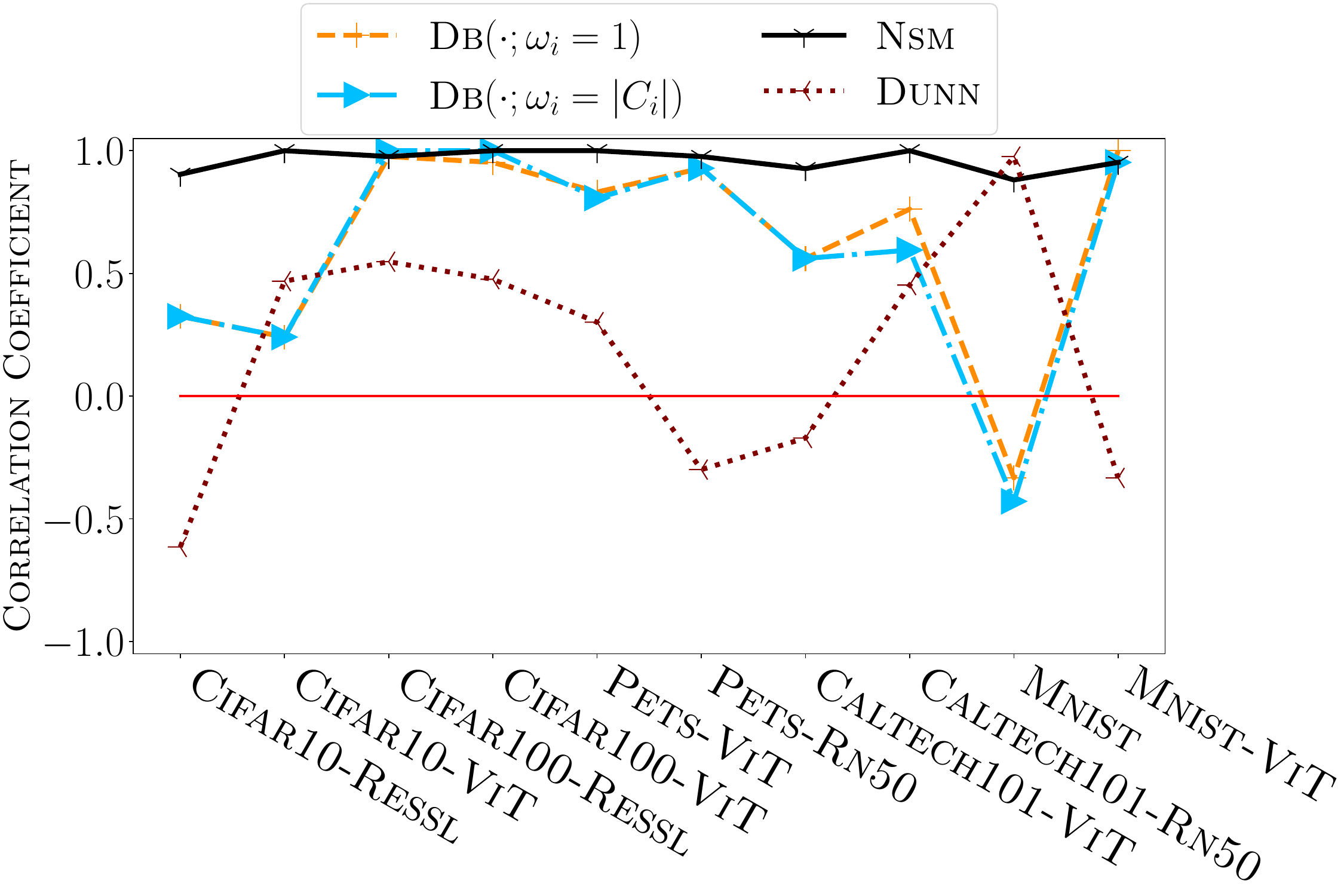}
    }
}
\caption{Spearman's correlation coefficient between clustering quality and external evaluation metrics. For \db, the coefficient is negated because a smaller index indicates better clustering quality.}
\label{figure:experiments:cqm:image-clustering}
\end{center}
\vspace{-0.5cm}
\end{figure*}

\subsection{Image clustering}
\label{section:experiments:cqm:clustering}

\noindent \textbf{Task}: In this task, we wish to cluster a set of images into categories in an unsupervised manner. One approach to this task is what is known as ``deep clustering,''~\citep{zhou2022deepclusteringfeaturesselfsupervised} consisting of representation learning using a deep network, followed by the application of a learned clustering method. In our experiments, we apply unsupervised clustering to learned representations of data.

\noindent \textbf{Datasets}: We use a suite of benchmark datasets, encoded using state-of-the-art vision models. As before, in choosing these datasets, we intend to cover a range of dataset sizes and dimensionality.

We give a full description of the datasets in Appendix~\ref{appendix:datasets-image}, but include a brief summary here: \cifarten and \cifarhundred ($10$ and $100$ classes) embedded with \textsc{Clip-ViT-L14}~\citep{clip} and \ressl~\citep{resnet}, denoted by ``\vit'' and ``\ressl'' suffixes; \mnist-\vit and \mnist-\ressl ($10$ classes); \pets-\vit ($37$ classes) embedded with \textsc{Clip-ViT-L14} and \pets-$\textsc{Rn}50$ embedded with \textsc{Clip-ResNet50}~\citep{resnet}; and, \caltech-\vit and \caltech-\textsc{Rn}$50$ ($101$ classes).

\noindent \textbf{Evaluation protocol}: As in the \gls{anns} experiments, we cluster a dataset into the target number of classes associated with the dataset (e.g., $10$ for \cifarten and \mnist). Once clustered, we measure \emph{external evaluation} metrics including Mutual Information and Homogeneity, and report the Spearman's correlation coefficient between each metric and clustering quality. As before, we use standard and spherical \kmeans, with $5$, $10$, $20$, and $40$ iterations, resulting in $8$ different clusterings.

\noindent \textbf{Hypothesis}: A larger number of iterations of a fixed algorithm leads to better clustering, which in turn improves Mutual Information and Homogeneity. Our expectation is that quality measures should strongly correlate with the metrics, given the $8$ external metric and quality measurement pairs. The null hypothesis once again states that no strong correlation can be observed.

\noindent \textbf{Results}: We present the results of our experiments in Figure~\ref{figure:experiments:cqm:image-clustering}. Each figure plots the Spearman's correlation coefficient between one of the external evaluation metrics and clustering quality measures.

Neither the \dunn nor the \db index is consistently predictive of external evaluation metrics. Clustering~\nsm, however, shows a strong correlation with both Mutual Information and Homogeneity. All of \gls{clusteringnsm}'s measured correlation coefficients are statistically significant (with $p$-values less than $0.001$), rejecting the null hypothesis. As with the \gls{anns} task in Section~\ref{section:experiments:cqm:ann}, we present in Appendix~\ref{appendix:approximate-stability} results where we use approximate nearest neighbors to compute stability measures.

\section{Empirical Evaluation of \gls{pointnsm} for Clusterability}
\label{section:experiments:clusterability}

This section tests Theorem~\ref{theorem:nsm-as-clusterability} on datasets from Sections~\ref{section:experiments:cqm}. We fix $r$ from the set $\{ 2^{10}, 2^{11}, 2^{12}, 2^{13}\}$. We then extract the distribution of \gls{pointnsm}s with radius $r$ for each dataset $(\mathcal{X}_i, \delta_i)$ and choose its mean or an $\alpha$-quantile (e.g., $\alpha=0.1$) as a summary statistic of the distribution, denoted $s_i$. For scalability, we only consider $5\%$ of points from each dataset uniformly at random to compute the \gls{pointnsm} distribution---Appendix~\ref{appendix:point-nsm-full} presents results without sub-sampling. Next, we cluster (all of) $\mathcal{X}_i$ into $L_i=\lvert \mathcal{X}_i \rvert / r$ clusters using standard or spherical \kmeans and compute their \gls{clusteringnsm}.

Given all pairs of $(s_i, \gls{clusteringnsm}_{\delta_i}(C^{(i)};\; \omega^{(i)}))$, where $\omega^{(i)}_j=\lvert C^{(i)}_{j} \rvert$ for the $j$-th cluster of the $i$-th dataset, we report the Spearman's correlation coefficient. Theorem~\ref{theorem:nsm-as-clusterability} suggests that this correlation should be strong, but we note that the requirement of the theorem is not guaranteed to hold.

Figure~\ref{figure:experiments:point-nsm:distributions} shows the distribution of \gls{pointnsm}s for all datasets and $r=1024$ and $r=8192$. Unsurprisingly, as $r$ increases, the \gls{pointnsm} distribution shifts to the right---that trend holds across all four values of $r$. However, it is clear that different datasets have different \gls{pointnsm} distributions.

The differences between these distributions should, per Theorem~\ref{theorem:nsm-as-clusterability}, be reflected in \gls{clusteringnsm}s. We observe that to be true: Table~\ref{table:point-nsm:correlations} shows that \gls{pointnsm} statistics (mean and $0.1$-quantile) correlate strongly with \gls{clusteringnsm}s. Unsurprisingly, as the quantile approaches $1$, correlation weakens (excluded from Table~\ref{table:point-nsm:correlations}). Quantiles smaller than $0.5$ perform similarly to those reported in the table.

\begin{figure*}[t]
\begin{center}
\centerline{
    \subfloat[$r=2^{10}$]{
        \includegraphics[width=0.40\linewidth]{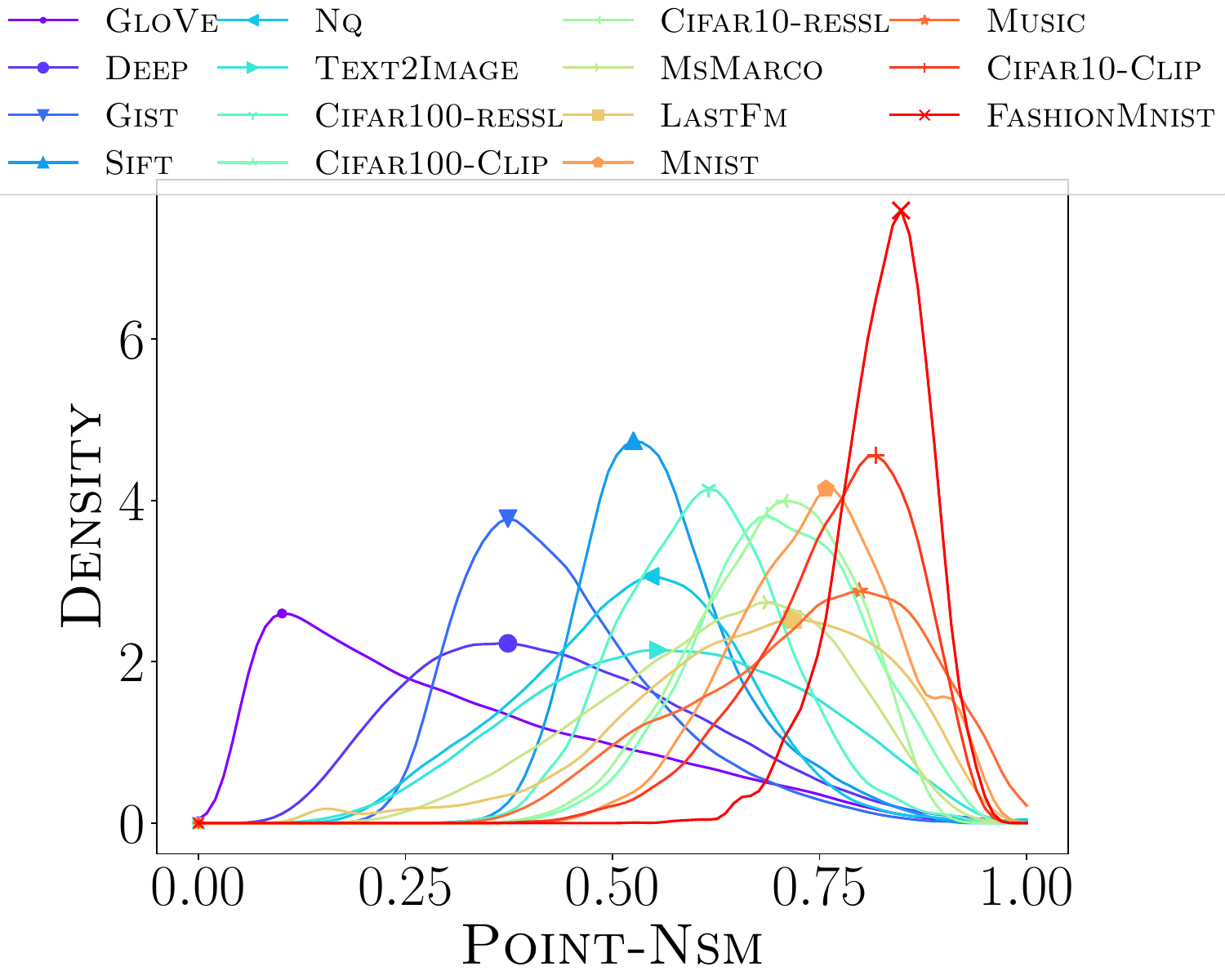}
    }
    \hspace{10pt}
    \subfloat[$r=2^{13}$]{
        \includegraphics[width=0.40\linewidth]{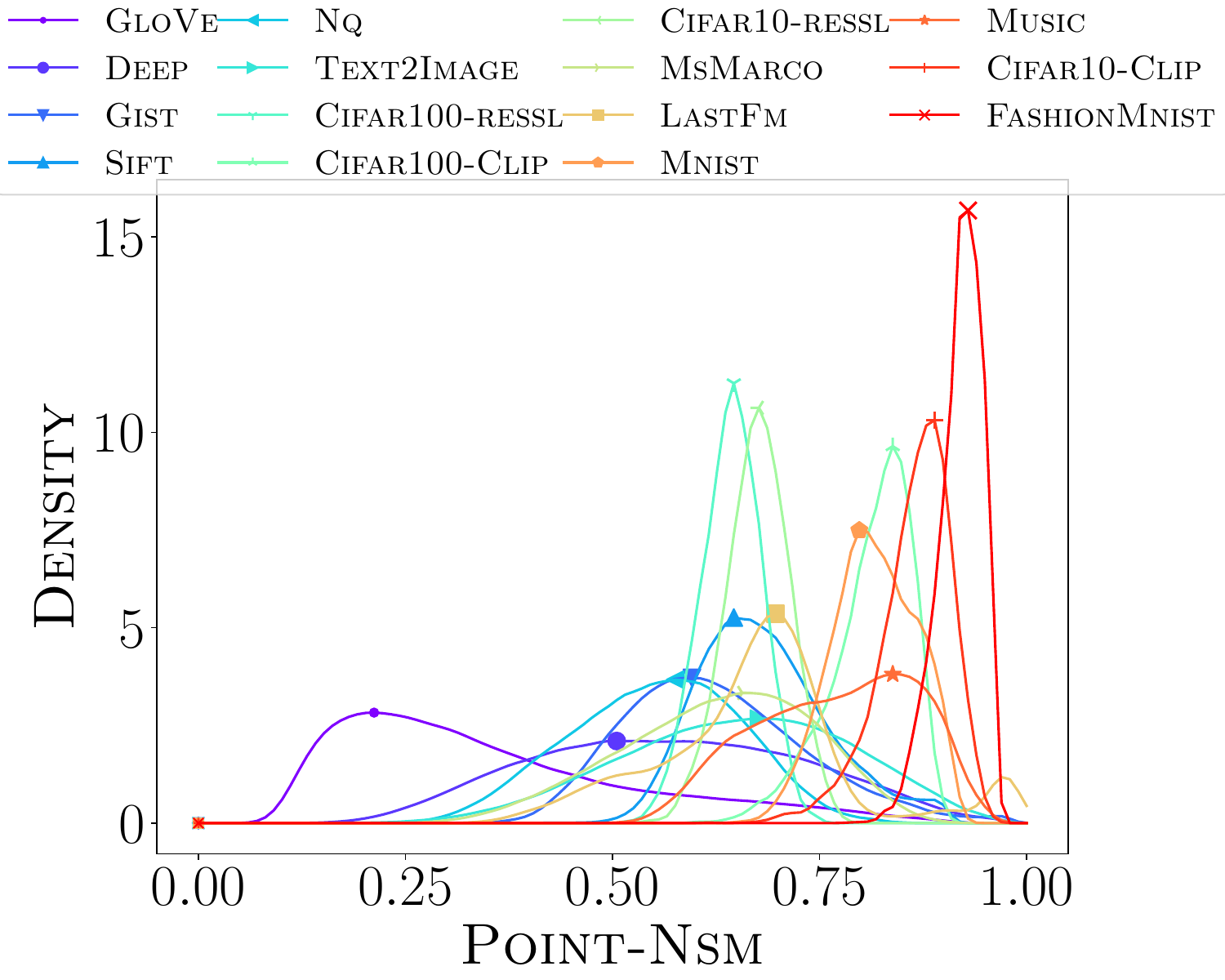}
    }
}
\caption{Point-\nsm distributions for various datasets.}
\label{figure:experiments:point-nsm:distributions}
\end{center}
\end{figure*}

\begin{table*}[t]
\footnotesize
\caption{Spearman's correlation coefficient between statistics from \gls{pointnsm} distributions and the \gls{clusteringnsm} for clusterings obtained from standard and spherical \kmeans. All correlations are significant ($p$-value $<0.001$). Last row aggregates statistics and \gls{clusteringnsm} for all values of $r$.}
\label{table:point-nsm:correlations}
\begin{center}
\begin{sc}
\begin{tabular}{l|cc|cc}
\toprule
 & \multicolumn{2}{c}{Standard \kmeans} & \multicolumn{2}{c}{Spherical \kmeans}\\
\gls{pointnsm} radius & Mean & $0.1$-quantile & Mean & $0.1$-quantile \\
\midrule
$r=1024$ & 0.81 & 0.86 & 0.79 & 0.85 \\
$r=2048$ & 0.82 & 0.85 & 0.82 & 0.85 \\
$r=4096$ & 0.74 & 0.81 & 0.74 & 0.81 \\
$r=8192$ & 0.76 & 0.78 & 0.76 & 0.78 \\
\midrule
Combined & 0.79 & 0.83 & 0.82 & 0.85 \\
\bottomrule
\end{tabular}
\end{sc}
\end{center}
\end{table*}

%% file: sections/conclusion.tex
\section{Discussion}
\label{section:discussion}
We learned from Section~\ref{section:experiments:cqm} that \gls{clusteringnsm} is an appropriate quality measure for tasks that involve the application of flat clustering. Examples include clustering-based \gls{anns}---which organizes points into clusters for faster albeit approximate search---and classification via clustering---which requires extraction of geometric structures from a set of points. Importantly, \gls{clusteringnsm} performs more strongly than other measures examined in this work, determined by the correlation between the internal quality measures and the task's external measures of quality.

That result is not surprising in and of itself. What is perhaps more interesting---and one of the novel contributions of this work---is the notion of \gls{pointnsm} as a measure of \emph{clusterability}. In particular, the distribution of \gls{pointnsm}s correlates with the \gls{clusteringnsm} of a flat clustering. Putting together the two pieces, we have presented a measure of clusterability that correlates with \gls{clusteringnsm}, which in turn correlates with task performance metrics.

This reliable way of assessing if a dataset is clusterable bridges a gap that has existed in the \gls{anns} literature: Determining whether clustering-based \gls{anns} is suitable for a dataset can now be done by measuring its \gls{pointnsm} distribution. This is of particular interest because \gls{pointnsm} is \emph{independent} of the query distribution. Contrast that with existing evaluation protocols that simply use a test query distribution to determine the efficacy of search on a dataset~\citep{vecchiato2024learningtorank,bruch2023bridging,douze2024faiss}.

\section{Concluding Remarks}
\label{section:conclusion}
In this work, we presented an internal measure of clustering quality, called \gls{clusteringnsm}, that is a function of the nearest neighbor stability of each cluster. We further showed that the same concept can be used as a measure of clusterability of a collection of points, which we referred to as \gls{pointnsm}. Empirically, a dataset with a distribution of \gls{pointnsm}s that is concentrated closer to $1$ has a higher \gls{clusteringnsm} for flat clustering algorithms. A higher \gls{clusteringnsm}, in turn, implies a higher performance metric for an end-task such as \gls{anns} and classification.

Our measures are agnostic to the distance function, and extend to nonnegative functions such as inner product. However, they target tasks that depend on the geometric structure of the data. That family includes all flavors of \gls{anns}, as well as classification tasks where the geometry of the point collection carries information about ground-truth labels.

We leave a few questions as future work. We wish to expand our understanding of the role of weights in Definition~\ref{definition:clustering-nsm}, which also affects Theorem~\ref{theorem:nsm-as-clusterability}. We also plan to investigate if \gls{pointnsm} can serve as a theoretical tool for explaining the efficacy of widely-used graph-based \gls{anns} algorithms---a speculation that arose based on empirical evidence per Appendix~\ref{appendix:point-nsm-misc}.

%% file: appendix/flat-clustering-proof.tex
\newpage
\section{Extension of Theorem~\ref{theorem:nsm-as-clusterability}}
\label{appendix:theorem-extension}

In Section~\ref{section:nsm:clusterability}, we showed that the average \gls{pointnsm} of a dataset coincides with the average \gls{clusteringnsm} over spherical flat clusterings of the same data. In this section, we consider general flat clusterings (i.e., spherical or non-spherical) and show that the result holds with a slight modification.

\begin{theorem}
    \label{theorem:nsm-as-clusterability-general}
    Consider $(\mathcal{X}, \delta)$ in $\mathbb{R}^d$ where $\lvert \mathcal{X} \rvert = L \ell r$ for integers $L, \ell, r > 1$. Suppose that $\mathcal{X}$ can be clustered into $L$ clusters of size $\ell r$ each, where a cluster can be decomposed into a set of $\ell$ $\delta$-balls each consisting of $r$ points. Denote the set of all such clusterings by $\mathcal{C}$. If $\omega_i = \ell r$, then $\E_{C \sim \mathcal{C}}[\gls{clusteringnsm}(C;\; \omega)] \geq \E_{u \sim \mathcal{X}}[\gls{pointnsm}(u;\; r)]$. Furthermore, $\gls{clusteringnsm}(C;\; \omega) \leq \E[\gls{pointnsm}(u;\;r)] - \sqrt{\frac{\log(1/\epsilon)}{2L}}$ with probability at most $\epsilon$.
\end{theorem}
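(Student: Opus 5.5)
The plan is to reduce Theorem~\ref{theorem:nsm-as-clusterability-general} to the argument behind Theorem~\ref{theorem:nsm-as-clusterability}. The key extra ingredient is that \gls{setnsm} is superadditive under disjoint unions.

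\textbf{Step 1: superadditivity.} Let $S = \bigcup_{j=1}^{\ell} B_j$ be a disjoint union with $\lvert B_j \rvert = r$ for every $j$. A point $u \in B_j$ with $\NN(u) \in B_j$ also satisfies $\NN(u) \in S$. Counting such points over all $j$ gives
\[
\gls{setnsm}(S) \;\geq\; \frac{1}{\ell r}\sum_{j=1}^{\ell} r\,\gls{setnsm}(B_j) \;=\; \frac{1}{\ell}\sum_{j=1}^{\ell}\gls{setnsm}(B_j).
\]
The inequality can be strict, because a point's nearest neighbor may lie in a sibling ball of the same cluster.

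\textbf{Step 2: reduce to ball centers.} Each ball $B_j$ is a $\delta$-ball of $r$ points centered at some $u_j \in \mathcal{X}$. As in the proof of Theorem~\ref{theorem:nsm-as-clusterability}, such a ball is $B_r(u_j)$, so $\gls{setnsm}(B_j) = \gls{pointnsm}(u_j;\; r)$. With $\omega_i = \ell r$ for every cluster, Step 1 gives
\[
\gls{clusteringnsm}(C;\;\omega) \;\geq\; \frac{1}{L\ell}\sum_{u \in \mathcal{X}} \mathbbm{1}_{B_r(u)\in D(C)}\,\gls{pointnsm}(u;\; r),
\]
where $D(C)$ is the set of $L\ell$ balls in the decomposition of $C$. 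The right-hand side is exactly the clustering-\gls{nsm} of the refined spherical clustering $D(C)$, which consists of $L\ell$ balls of size $r$.

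\textbf{Step 3: take expectations.} Keep the symmetry assumption of Theorem~\ref{theorem:nsm-as-clusterability}, namely that each point is equally likely to be a ball center. Then $\Prob[B_r(u) \in D(C)] = L\ell/(L\ell r) = 1/r$. Linearity of expectation turns the right-hand side into $\frac{1}{L\ell r}\sum_u \gls{pointnsm}(u;\; r) = \E_{u\sim\mathcal{X}}[\gls{pointnsm}(u;\; r)]$, which is the claimed inequality.

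\textbf{Step 4: concentration.} Write $Y_i = \frac{1}{\ell}\sum_{B \subset C_i}\gls{setnsm}(B) \in [0,1]$ for the per-cluster average of ball \glspl{setnsm}. By Step 1, $\gls{clusteringnsm}(C;\;\omega) \geq \frac{1}{L}\sum_{i=1}^{L} Y_i$. The $Y_i$ are bounded and, by the same symmetry argument used in Theorem~\ref{theorem:nsm-as-clusterability}, exchangeable. Their mean is at least $\E[\gls{pointnsm}]$ by Step 3. The exchangeable Hoeffding bound of \citet{barber2025hoeffdingbernsteininequalitiesweighted} then gives
\[
\Prob\Bigl[\tfrac{1}{L}\textstyle\sum_i Y_i \leq \E[\gls{pointnsm}(u;\;r)] - \sqrt{\log(1/\epsilon)/(2L)}\Bigr] \leq \epsilon.
\]
Since $\gls{clusteringnsm}(C;\;\omega)$ dominates $\frac{1}{L}\sum_i Y_i$, the same bound holds for it. The lower bound only helps here: it means the deviation event for $\gls{clusteringnsm}$ is contained in the deviation event for $\frac{1}{L}\sum_i Y_i$.

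\textbf{Main obstacle.} The hardest part is making the probabilistic model precise, specifically the uniform-center assumption and the exchangeability of the $Y_i$. The theorem statement does not restate the equal-likelihood hypothesis, so I will import it from Theorem~\ref{theorem:nsm-as-clusterability}. A further subtlety is that the concentration step needs the expectation of $\frac{1}{L}\sum_i Y_i$ itself, not that of $\gls{clusteringnsm}$; routing through the $Y_i$ is what handles this. I would resolve exchangeability as in the spherical case: relabeling which clusters are chosen does not change the joint law of the $Y_i$.
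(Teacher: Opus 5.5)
Your proposal is correct and follows the paper's proof essentially step for step: superadditivity of \gls{setnsm} over the $\ell$ balls decomposing a cluster, reduction to the spherical setting of Theorem~\ref{theorem:nsm-as-clusterability} for the expectation bound (the paper also relies on that theorem's equal-likelihood assumption, only implicitly), and a one-sided exchangeable Hoeffding bound. The only difference is cosmetic: you apply Hoeffding to the lower proxy $\frac{1}{L}\sum_i Y_i$ and transfer the bound by pointwise domination, whereas the paper applies it to \gls{clusteringnsm} directly and absorbs the nonnegative gap $\theta$ between the expected \gls{clusteringnsm} and the expected \gls{pointnsm}, so the ``subtlety'' you flag is not actually an obstacle and either route works.
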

\begin{proof}
    Our argument closely follows the proof of Theorem~\ref{theorem:nsm-as-clusterability}. Define as before $B_r(u)$ to be a ball centered at $u$ with radius $\max_{v \in (r-1)\text{-}\nn(u)} \delta(u, v)$.
    
    First, observe that for a single cluster $C_i$, there is by assumption a set of points $\hat{C}_i = \{ u_{j} \}_{j=1}^\ell \subset C_i$ such that $C_i = \cup_{u \in \hat{C}_i}  B_r(u)$. Clearly:
    \begin{equation*}
        \gls{setnsm}(C_i) \geq \frac{1}{\ell} \sum_{u \in \hat{C}_i} \gls{setnsm}(B_r(u)).
    \end{equation*}
    
    Let us now construct $\mathcal{C}$. Form  $B_r(u)$ for every $u \in \mathcal{X}$. From all $L \cdot \ell \cdot r$ such balls, find $L$ non-overlapping subsets that cover $\mathcal{X}$. Each $L$ non-overlapping subset is a valid clustering, so that the set of subsets make up $\mathcal{C}$. We can now state the following:
    \begin{align*}
        \E_{C \sim \mathcal{C}}&[ \gls{clusteringnsm}(C;\; \omega)] = \E[ \frac{1}{L} \sum_{C_i \in C} \gls{setnsm}(C_i) ]
        \geq \E[ \frac{1}{L} \sum_{C_i \in C} \frac{1}{\ell} \sum_{u \in \hat{C}_i} \gls{setnsm}(B_r(u)) ].
    \end{align*}
    Notice that $\mathcal{X} = \cup_{i=1}^L C_i = \cup_{i=1}^L \cup_{u \in \hat{C}_i} B_r(u)$. Denoting by $\hat{C} = \cup_{i=1}^L \hat{C}_i$, we can write $\mathcal{X} = \cup_{u \in \hat{C}} B_r(u)$. We have effectively reduced the problem to the setup of Theorem~\ref{theorem:nsm-as-clusterability}, where a clustering is assumed to be made up of $\delta$-balls. Therefore, the right-hand-side is equal to the average \gls{pointnsm}, giving:
    \begin{equation*}
        \E_{C \sim \mathcal{C}}[ \gls{clusteringnsm}(C;\; \omega)] \geq \E_{u \sim \mathcal{X}}[\gls{pointnsm}(u;\;r)].
    \end{equation*}
    
    We have thus bounded the expected value of \gls{clusteringnsm} from below, which enables us to apply the one-sided Hoeffding's inequality to obtain a conservative bound on its value:
    \begin{align*}
        \exp{(-2Lt^2)}
        &\geq \Prob\big[ \gls{clusteringnsm}(C;\; \omega) - \E[\gls{clusteringnsm}(C;\; \omega)] \leq -t \big] \\
        &= \Prob\big[ \gls{clusteringnsm}(C;\; \omega) - (\E[\gls{pointnsm}(u;\;r)] + \theta) \leq -t \big] \tag{$\theta \geq 0$} \\
        &= \Prob\big[ \gls{clusteringnsm}(C;\; \omega) - \E[\gls{pointnsm}(u;\;r)] \leq -t + \theta \big] \\      
        &\geq \Prob\big[ \gls{clusteringnsm}(C;\; \omega) - \E[\gls{pointnsm}(u;\;r)] \leq -t \big] \\
        &= \Prob\big[ \gls{clusteringnsm}(C;\; \omega) \leq \E[\gls{pointnsm}(u;\;r)] - t \big].
    \end{align*}

    Let the probability upper bound be denoted by $\epsilon$:
    \begin{align*}
        \epsilon = \exp(-2Lt^2)
        &\implies \log(\epsilon) = -2Lt^2 
        \implies -\log(\epsilon) = 2Lt^2
        \implies \log(1/\epsilon) = 2Lt^2
        \\ &\implies t^2 = \frac{\log(1/\epsilon)}{2L}
        \implies t = \sqrt{\frac{\log(1/\epsilon)}{2L}}.
    \end{align*}

    Thus, we obtain the following bound on \gls{clusteringnsm}, which holds with probability at most $\epsilon$:
    \begin{equation*}
        \gls{clusteringnsm}(C;\; \omega) \leq \E[\gls{pointnsm}(u;\;r)] - \sqrt{\frac{\log(1/\epsilon)}{2L}}.
    \end{equation*}
\end{proof}

%% file: appendix/datasets-ann.tex
\newpage
\section{Full Description of \ann Datasets}
\label{appendix:datasets-ann}

We describe the datasets used in Section~\ref{section:experiments:cqm:ann} in more detail here:
\begin{itemize}[leftmargin=*]
    \item Euclidean Search:
    \begin{itemize}
        \item \mnist: A collection of $60{,}000$, $28\times28$ pixel grayscale images of handwritten digits ($0$ through $9$) flattened as $784$-dimensional vectors~\citep{mnist-dataset}. The dataset has $10{,}000$ additional points used as queries. The dataset is made available under the terms of the Creative Commons Attribution-Share Alike 3.0 license.

        \item \fashion: A dataset~\citep{fashionmnist-dataset} similar in size and dimensionality to \mnist, but representing images of $10$ fashion categories. The dataset is released under the The MIT License (MIT) Copyright \copyright 2017 Zalando SE, https://tech.zalando.com.

        \item \gist: A collection of $100{,}000$, $960$-dimensional image descriptors~\citep{gist-dataset} with $1{,}000$ query points. The dataset is available under the terms of the Creative Commons CC0 Public Domain Dedication license.

        \item \sift: A subset of $1$ million, $128$-dimensional data points along with $10{,}000$ queries~\citep{sift-dataset}. The dataset is available under the terms of the Creative Commons CC0 Public Domain Dedication license.
    \end{itemize}

    \item Cosine Similarity Search:
    \begin{itemize}
        \item \deep: Subset of $10$ million $96$-dimensional points from the billion deep image features dataset~\citep{deep-dataset} with $10{,}000$ queries. The dataset is available under the terms of Apache license 2.0.

        \item \glove: $1{.}2$ million, $200$-dimensional word embeddings trained on tweets~\citep{pennington-etal-2014-glove} with $10{,}000$ test queries. The dataset is available under the terms of the Public Domain Dedication and license v1.0.

        \item \lastfm: A dataset of about $300{,}000$ vectors representing $65$ audio features per contemporary song, along with $50{,}000$ query points~\citep{lastfm-dataset}. The dataset is available for research only, strictly non-commercial use as per http://millionsongdataset.com/lastfm/.

        \item \msmarco: \textsc{Ms Marco} Passage Retrieval v1~\citep{nguyen2016msmarco} is a question-answering text dataset consisting of $8{.}8$ million short passages in English. We use the ``dev'' set of queries for retrieval, made up of $6{,}980$ questions. We embed individual passages and queries using the \textsc{all-MiniLM-L6-v2} model\footnote{Checkpoint at \url{https://huggingface.co/sentence-transformers/all-MiniLM-L6-v2}.} to form a $384$-dimensional vector collection. The dataset is available under the terms of the Creative Commons Attribution 4.0 International license. The model used to embed the dataset is available under the terms of Apache license 2.0.

        \item \nq: $2{.}7$ million, $1{,}536$-dimensional embeddings of the Natural Questions dataset~\citep{kwiatkowski-etal-2019-natural} with the \textsc{Ada-002} model.\footnote{\url{https://openai.com/index/new-and-improved-embedding-model/}} The dataset is available under the terms of Apache license 2.0.
    \end{itemize}

    \item Maximum Inner Product Search:
    \begin{itemize}
        \item \textimage: A cross-modal dataset, where data and query points may have different distributions in a shared space~\citep{pmlr-v176-simhadri22a}. We use a subset consisting of $10$ million $200$-dimensional data points along with a subset of $10{,}000$ test queries. The dataset is available under the terms of the Creative Commons Attribution 4.0 International license.
    
        \item \music: $1$ million $100$-dimensional points~\citep{music-dataset} with $1{,}000$ queries. To the best of our knowledge, the dataset comes with no information on the license under which it is made available (per https://github.com/stanis-morozov/ip-nsw?tab=readme-ov-file).
    \end{itemize}
\end{itemize}

%% file: appendix/datasets-image.tex
\newpage
\section{Full Description of Image Datasets}
\label{appendix:datasets-image}

The following is a complete description of the datasets used in the experiments Section~\ref{figure:experiments:cqm:image-clustering}:
\begin{itemize}[leftmargin=*]
    \item \cifarten and \cifarhundred: A collection of $60{,}000$, $32\times32$ color images in, respectively, $10$ and $100$ non-overlapping categories~\citep{krizhevsky2009learning}, available under the terms of The MIT License (MIT) Copyright \copyright 2021 Eka A. Kurniawan. The collections are labeled subsets of the $80$ million tiny images dataset~\citep{tinyimages-dataset}. We transform these images using the following vision models:
    \begin{itemize}
        \item \textsc{Clip-ViT-L14}: A \textsc{Clip}~\citep{clip} model that uses the Vision Transformer (\textsc{ViT})~\citep{vision-transformer} architecture to encode images, available under the terms of MIT License Copyright \copyright 2021 OpenAI. In particular, we use the \textsc{ViT-L14} model that is trained on $224$-by-$224$ pixel images. We denote this representation of a dataset by appending \vit to the name of the dataset (e.g., \cifarten-\vit). The vectors produced by this model are in a $768$-dimensional space governed by cosine similarity.
        \item \ressl: A vision model that is trained using self-supervised learning and encodes images into a $512$-dimensional space, where distance is based on cosine similarity~\citep{ressl}.
    \end{itemize}

    \item \mnist: A collection of $70{,}000$, $28\times28$ pixel grayscale images of handwritten digits ($0$ through $9$)~\citep{mnist-dataset}. We represent this dataset by a) flattening the images as $784$-dimensional vectors where distance is based on the Euclidean norm (denoted simply by \mnist); b) the \textsc{Clip-ViT-L14} and \ressl models as described earlier, denoted by \mnist-\vit and \mnist-\ressl, respectively. The dataset is made available under the terms of the Creative Commons Attribution-Share Alike 3.0 license.

    \item \pets: An annotated dataset of pets covering $37$ different breeds of cats and dogs~\citep{pets-dataset}. We encode the images using \textsc{Clip-ViT-L14} and \textsc{Clip-ResNet50}~\citep{resnet} (denoted $\textsc{Rn}50$), producing $1024$-dimensional vectors. The dataset is available under the terms of the Creative Commons Attribution-ShareAlike 4.0 International license.

    \item \caltech: Collection of pictures of objects from $101$ categories~\citep{caltech-dataset}. We encode the images using \textsc{Clip-ViT-L14} and \textsc{Clip-ResNet50}. The dataset is available under the terms of the Creative Commons CC0 Public Domain Dedication license.
\end{itemize}

%% file: appendix/ann-clusters.tex
\newpage
\section{Effect of Number of Clusters on the \ann Task}
\label{appendix:ann-clusters}

In Section~\ref{section:experiments:cqm:ann}, we set the number of clusters in the clustering-based \ann task to $L = \sqrt{\lvert \mathcal{X} \rvert}$ for dataset $\mathcal{X}$ to form the index. To study the effect of $L$ on the relative performance of different clustering quality measures, we present results for $L = \frac{1}{4} \sqrt{\lvert \mathcal{X} \rvert}$ and $L=\frac{1}{2} \sqrt{\lvert \mathcal{X} \rvert}$ in Figures~\ref{figure:appendix:experiments:cqm:ann:0.25sqrt_n} and~\ref{figure:appendix:experiments:cqm:ann:0.5sqrt_n}, respectively. The trends observed in Section~\ref{section:experiments:cqm:ann} still hold. However, it should be noted that, when the number of clusters is too small and consequently clusters are much larger, all quality measures' correlation with \ann accuracy degrades significantly. This, however, is not a surprising result.

\begin{figure}[t]
\begin{center}
\centerline{
    \subfloat[Top-$5$]{
        \includegraphics[width=0.49\linewidth]{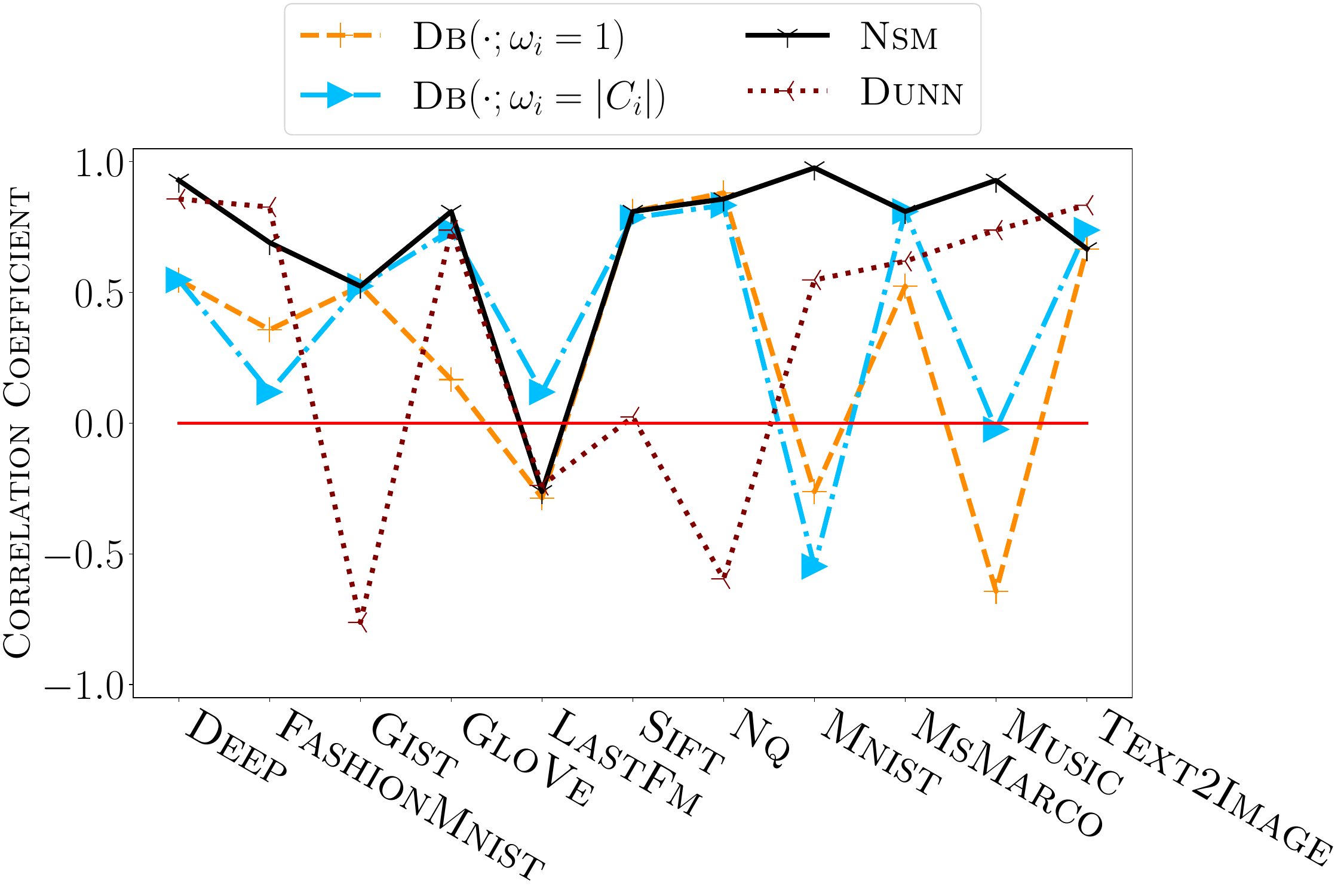}
    }
    \subfloat[Top-$10$]{
        \includegraphics[width=0.49\linewidth]{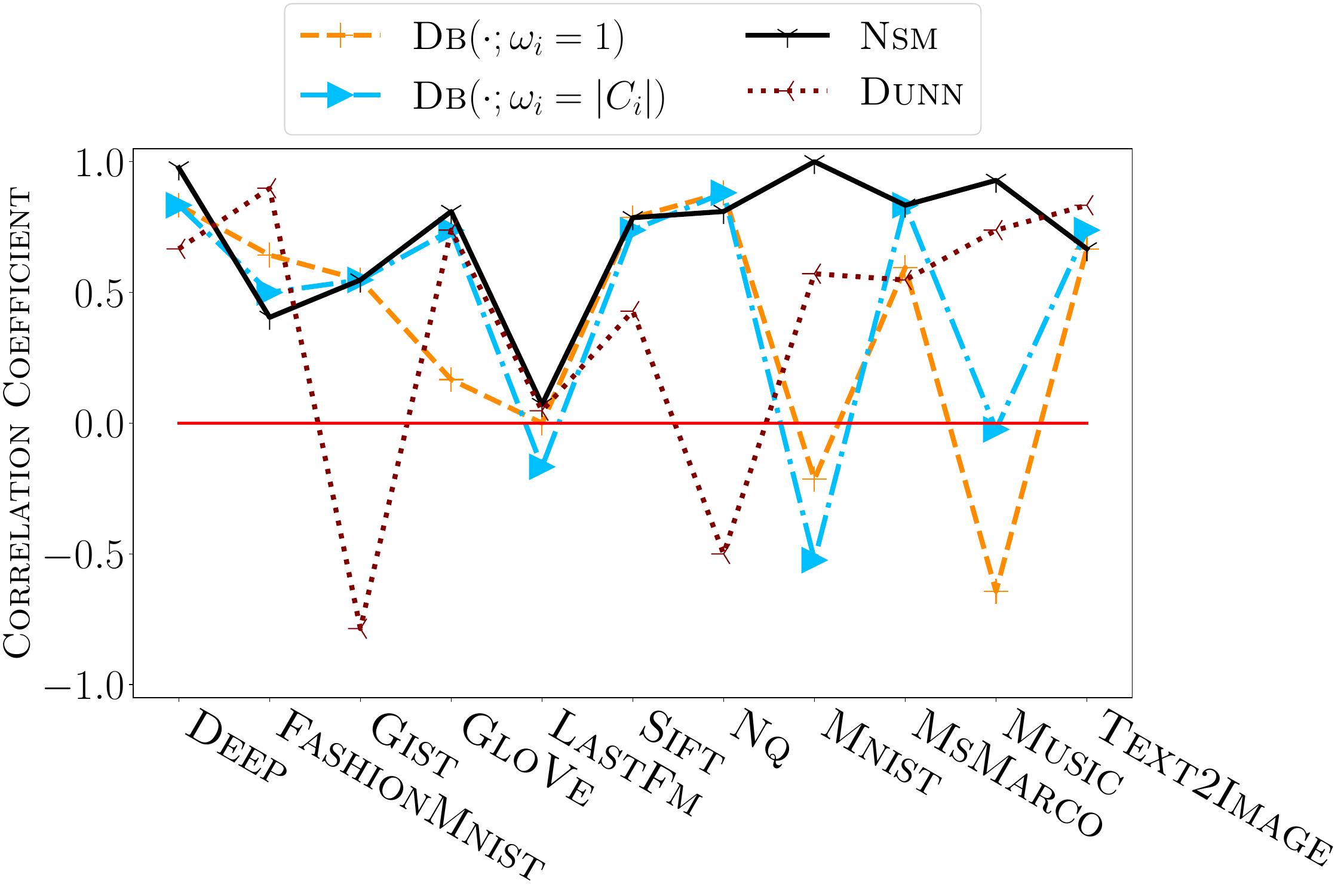}
    }
}
\caption{Spearman's correlation coefficient between clustering quality and top-$k$ \ann accuracy where the number of clusters is $\frac{1}{4} \sqrt{\lvert \mathcal{X} \rvert}$. For \db, we present the negated coefficient because smaller values indicate better clustering quality.}
\label{figure:appendix:experiments:cqm:ann:0.25sqrt_n}
\end{center}
\end{figure}

\begin{figure}[!t]
\begin{center}
\centerline{
    \subfloat[Top-$5$]{
        \includegraphics[width=0.49\linewidth]{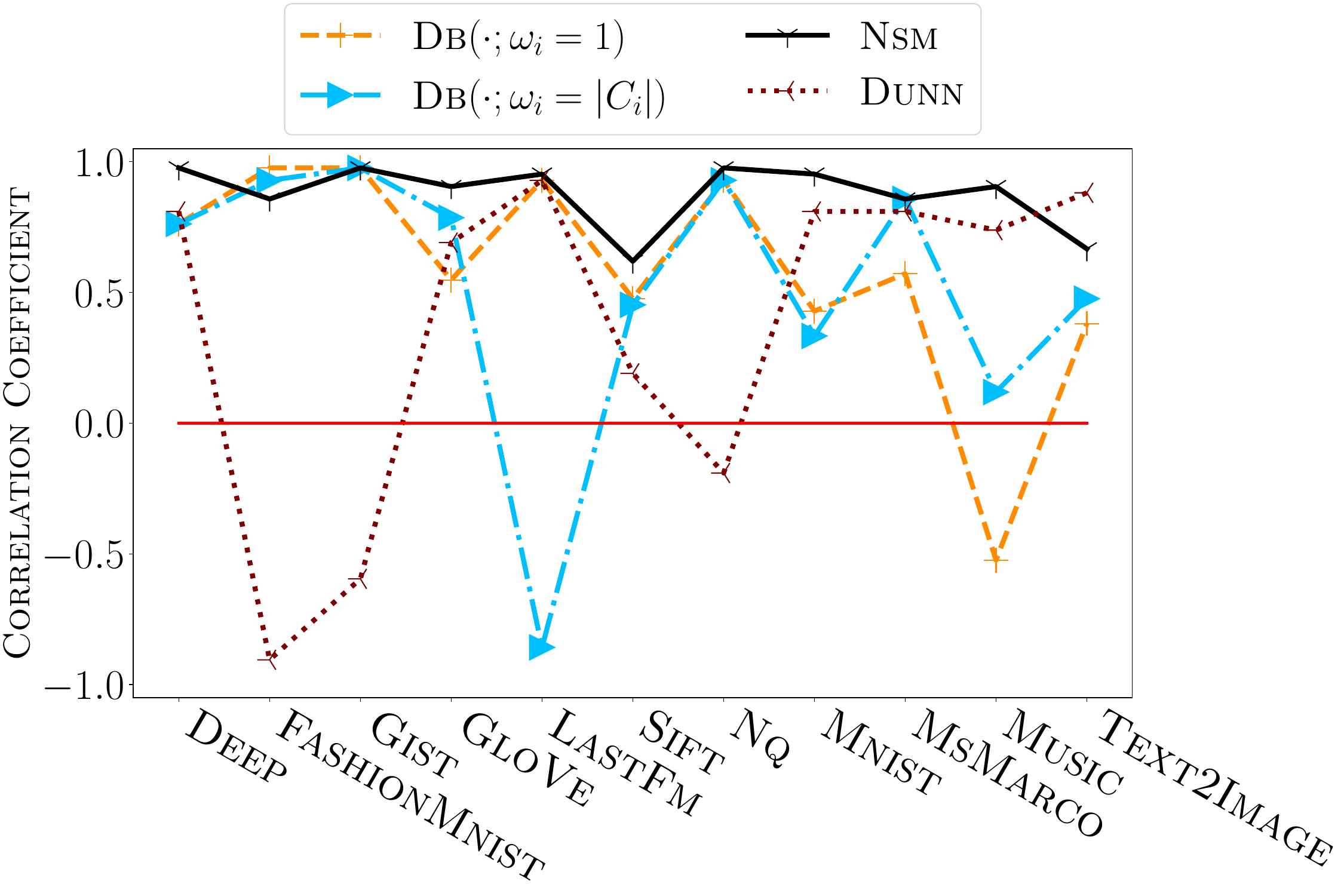}
    }
    \subfloat[Top-$10$]{
        \includegraphics[width=0.49\linewidth]{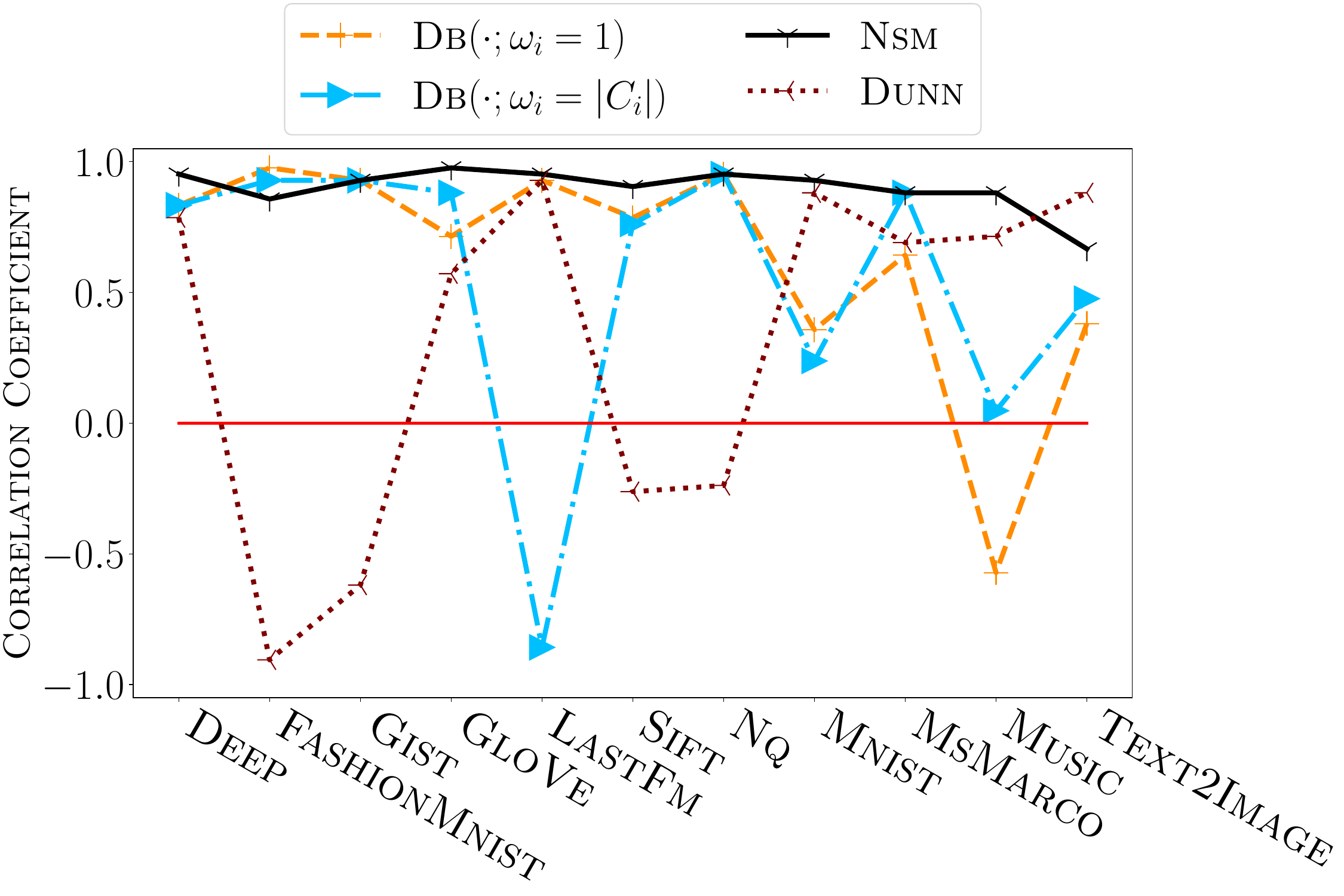}
    }
}
\caption{Spearman's correlation coefficient between clustering quality and top-$k$ \ann accuracy where the number of clusters is $\frac{1}{2} \sqrt{\lvert \mathcal{X} \rvert}$. For \db, we present the negated coefficient because smaller values indicate better clustering quality.}
\label{figure:appendix:experiments:cqm:ann:0.5sqrt_n}
\end{center}
\end{figure}

%% file: appendix/multiprobe-ann.tex
\newpage
\section{Clustering Quality Measures and Number of Probed Clusters}
\label{appendix:multiprobe-ann}

In the experiments presented in Section~\ref{section:experiments:cqm:ann}, we configured the clustering-based \gls{anns} algorithm to route queries to a single cluster by setting $\ell=1$. We then computed the correlation between \gls{anns} accuracy and each clustering quality measure, given a set of \gls{anns} indices.

In this appendix, we examine the effect of $\ell$ on correlation coefficients. As $\ell$ becomes larger, \gls{anns} accuracy increases, with accuracy saturating when $\ell$ is large enough relative to the number of clusters $L$. The question we wish to shed light on is whether measures of clustering quality are predictive of the relative performance of \gls{anns} as $\ell$ becomes larger.

We do so by repeating the experiments of Section~\ref{section:experiments:cqm:ann} reported in Figure~\ref{figure:experiments:cqm:ann}, but by setting $\ell$ to a sequence of values---up to roughly $5\%$ of the number of clusters. Figures~\ref{figure:appendix:experiments:cqm:ann-multiprobe} and~\ref{figure:appendix:experiments:cqm:ann-multiprobe-cntd} plots the correlation coefficients at each $\ell$ for all measures and every dataset. We have also included a plot of \gls{anns} accuracy as a function of $\ell$ for completeness.

We observe that on most datasets correlation coefficients suffer some degradation as $\ell$ grows. However, the relative performance of clustering quality measures remains stable. That is, where \gls{clusteringnsm} is more correlated with \gls{anns} accuracy than \db at $\ell=1$, it remains so as $\ell$ grows. The exception to that is \glove where \gls{clusteringnsm}'s correlation becomes smaller than \db's when $\ell$ is roughly $1\%$ of the total number of clusters, $L$.

\begin{figure}[t]
\begin{center}
\centerline{
    \subfloat[\deep]{
        \includegraphics[width=0.5\linewidth]{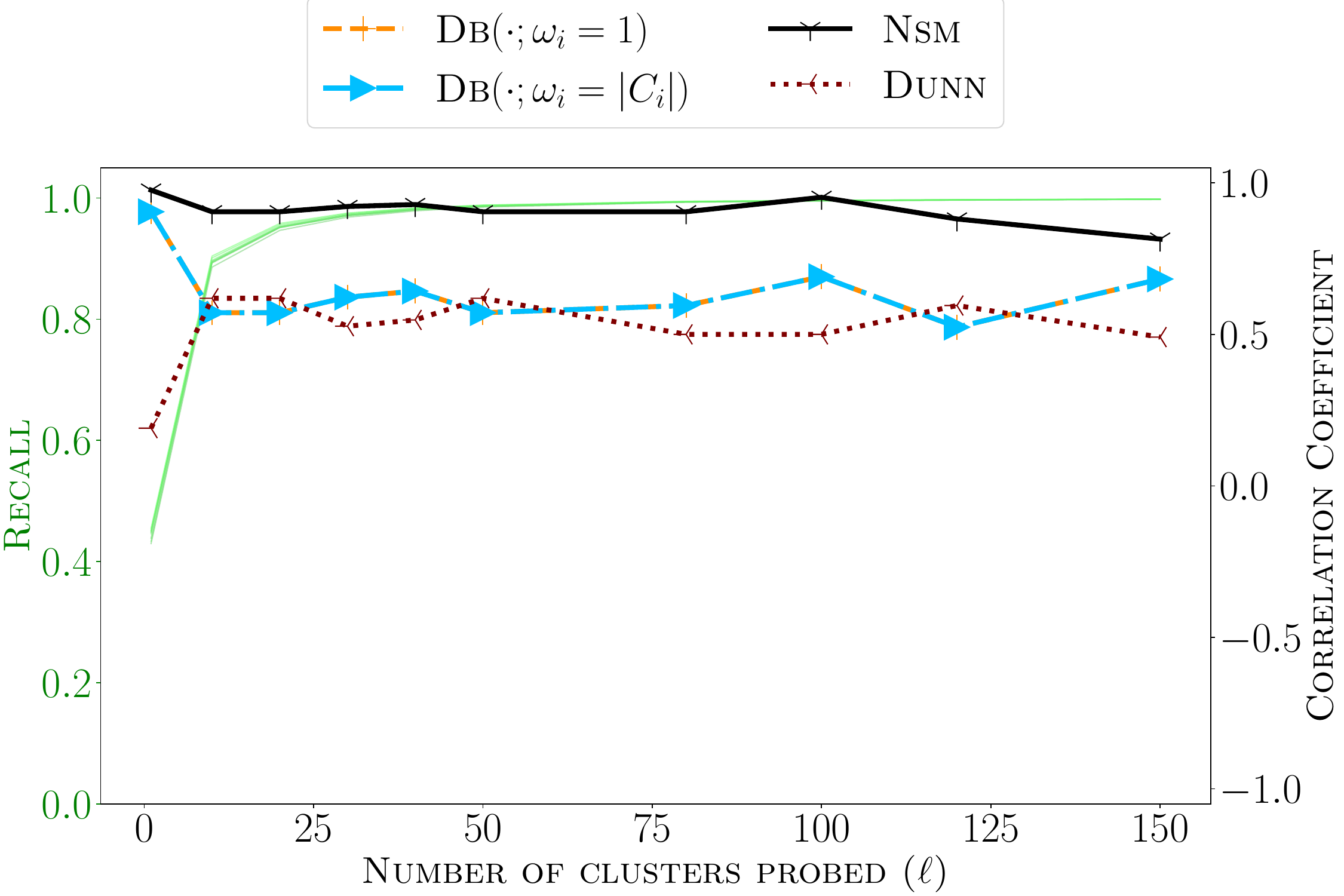}
    }
    \subfloat[\fashion]{
        \includegraphics[width=0.5\linewidth]{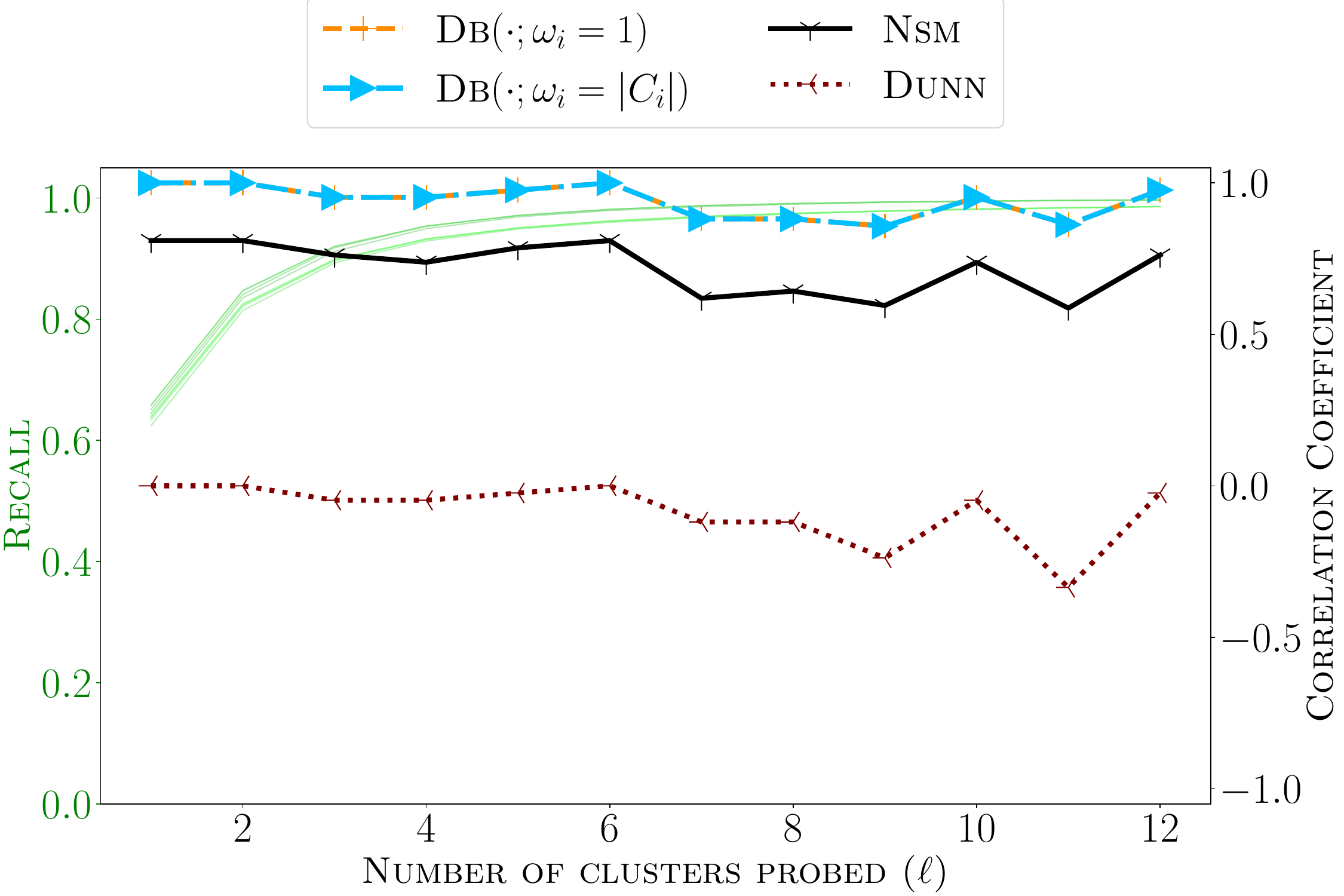}
    }
}
\centerline{
    \subfloat[\gist]{
        \includegraphics[width=0.5\linewidth]{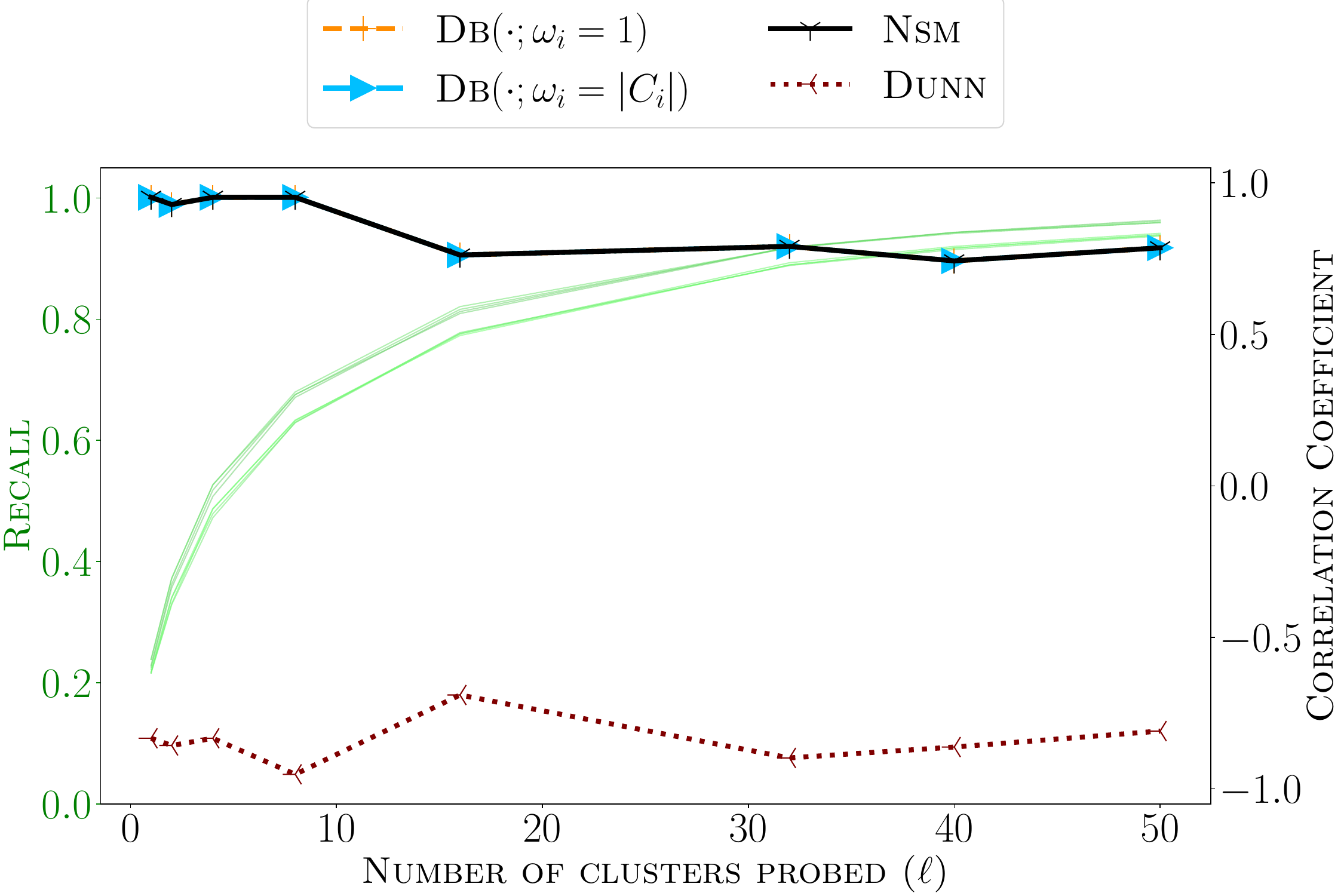}
    }
    \subfloat[\glove]{
        \includegraphics[width=0.5\linewidth]{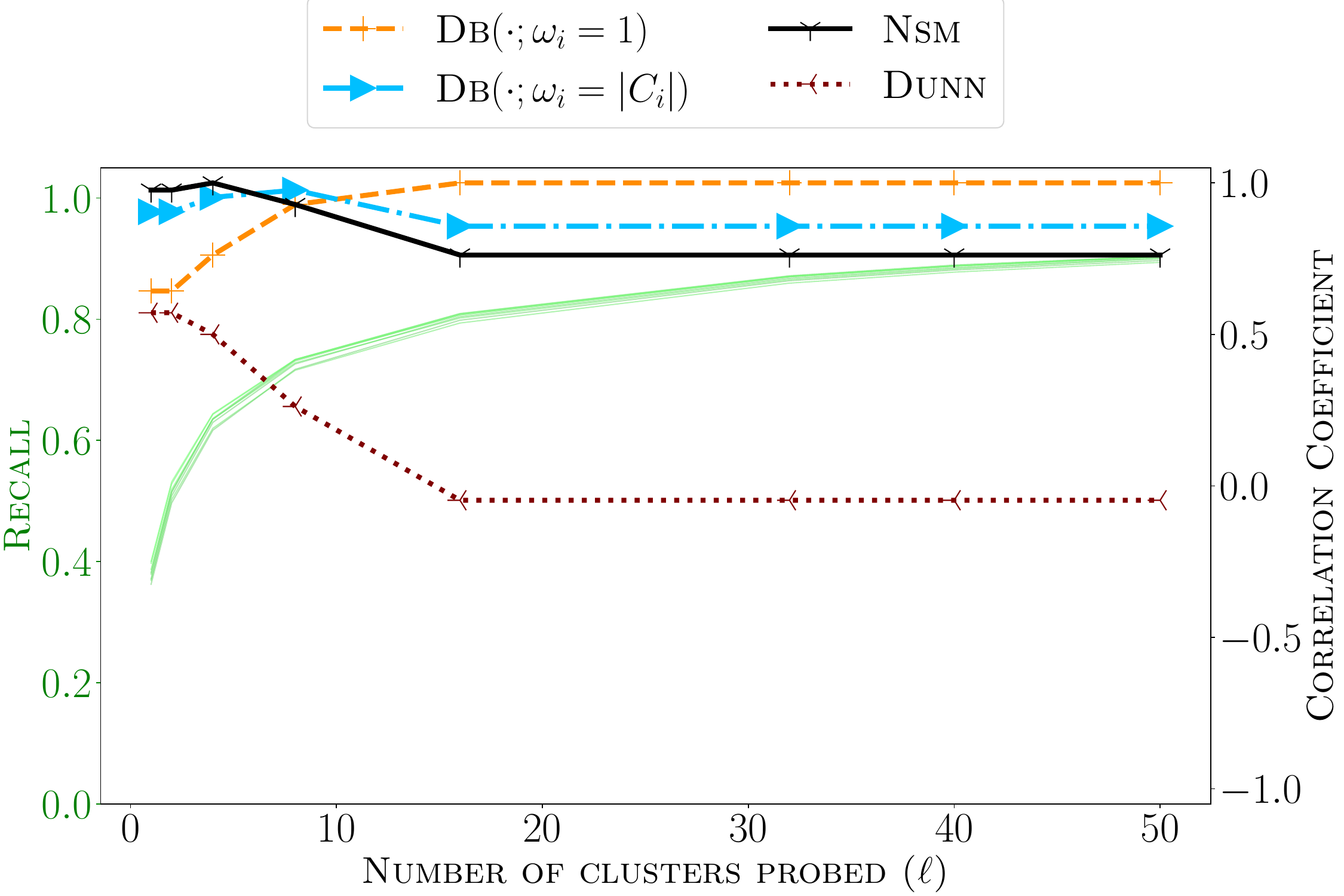}
    }
}
\caption{Spearman's correlation coefficient between clustering quality and top-$5$ \gls{anns} accuracy, as a function of $\ell$ with values between $1$ and $5\%$ of the total number of clusters. The figures also include, for each of the eight clustering algorithms, recall (green curves) versus $\ell$ for reference. For \db, we present the negated coefficient because smaller values indicate better clustering quality.}
\label{figure:appendix:experiments:cqm:ann-multiprobe}
\end{center}
\end{figure}

\begin{figure}[H]
\begin{center}
\centerline{
    \subfloat[\lastfm]{
        \includegraphics[width=0.4\linewidth]{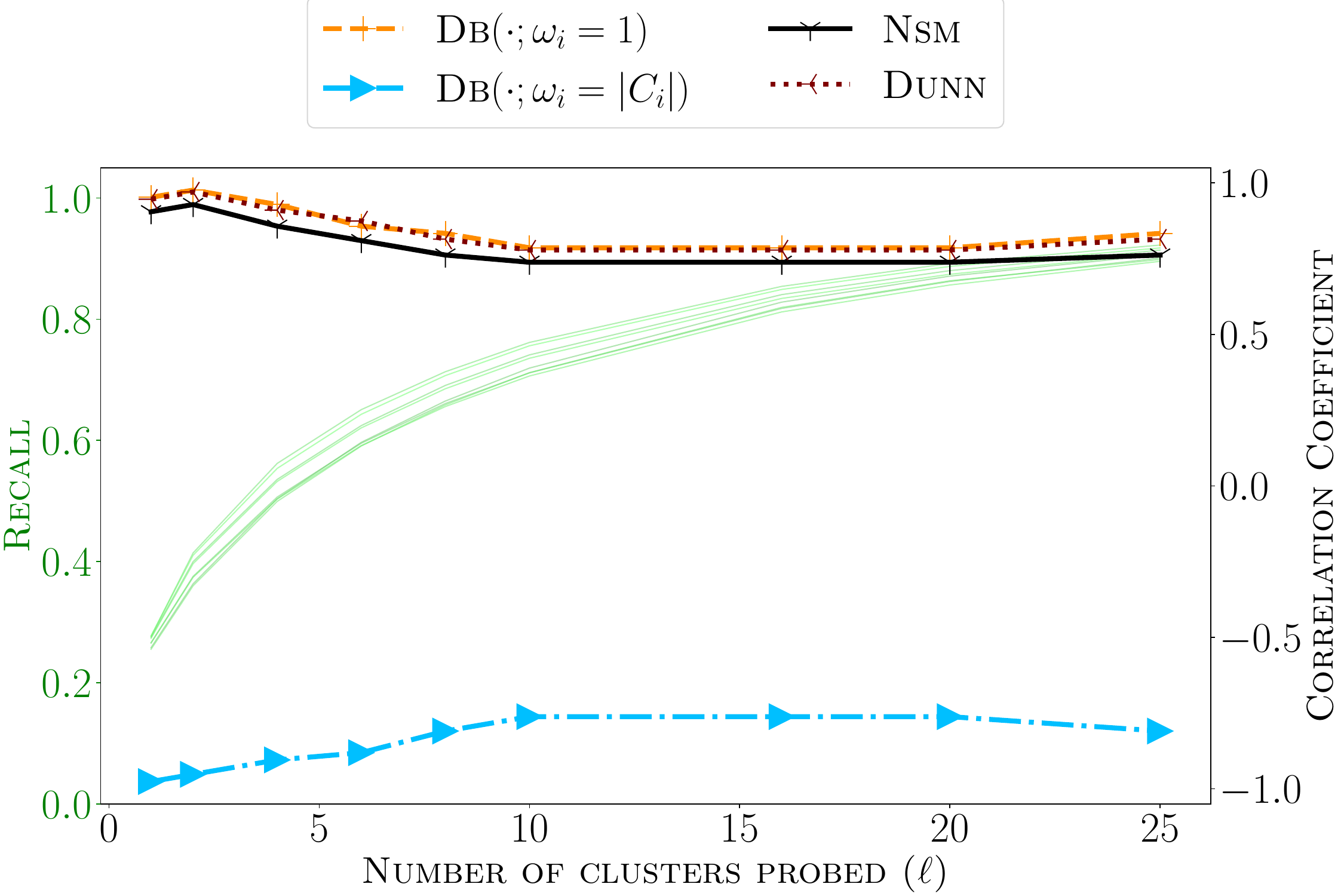}
    }
    \subfloat[\sift]{
        \includegraphics[width=0.4\linewidth]{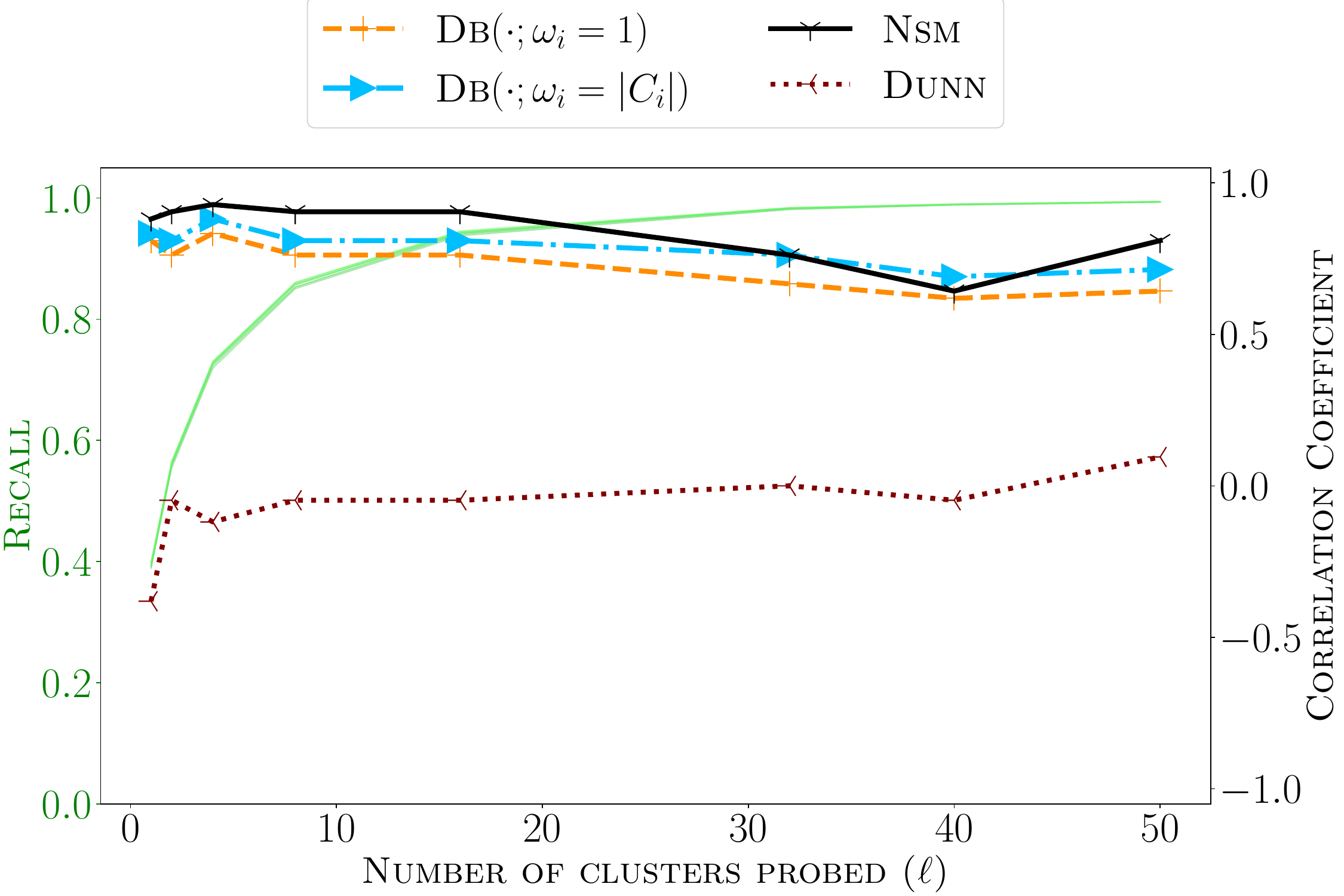}
    }
}
\centerline{
    \subfloat[\nq]{
        \includegraphics[width=0.4\linewidth]{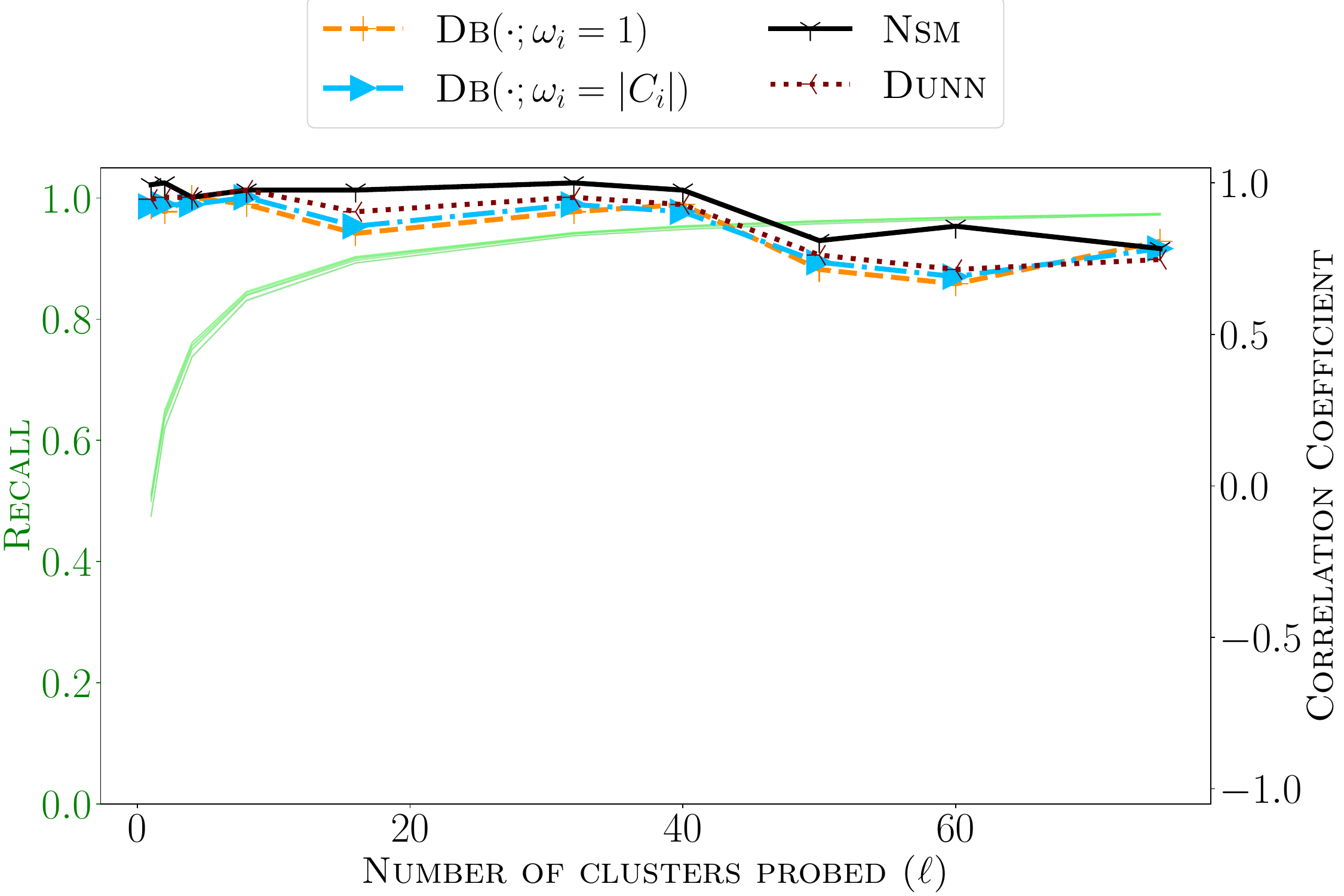}
    }
    \subfloat[\mnist]{
        \includegraphics[width=0.4\linewidth]{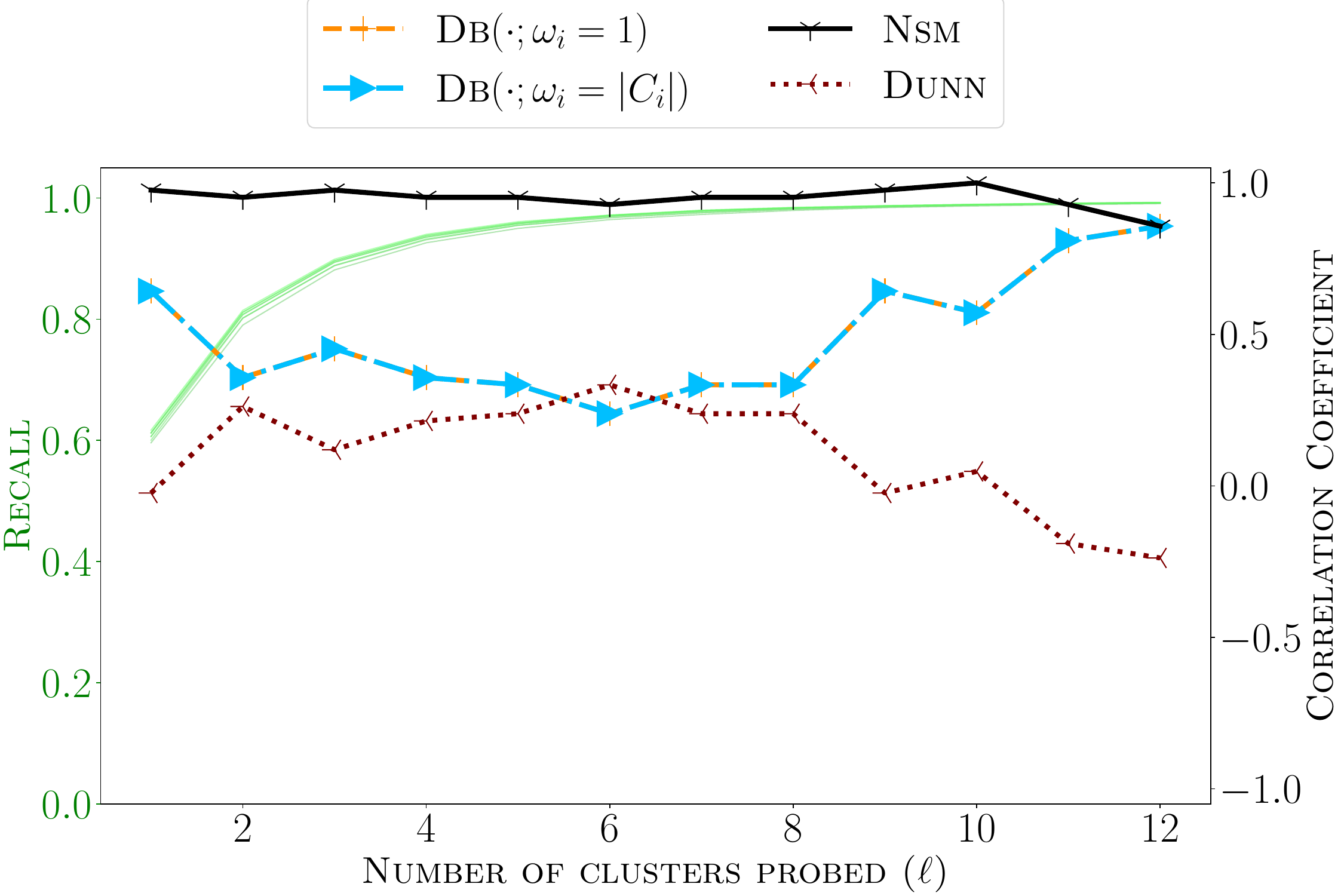}
    }
}
\centerline{
    \subfloat[\msmarco]{
        \includegraphics[width=0.4\linewidth]{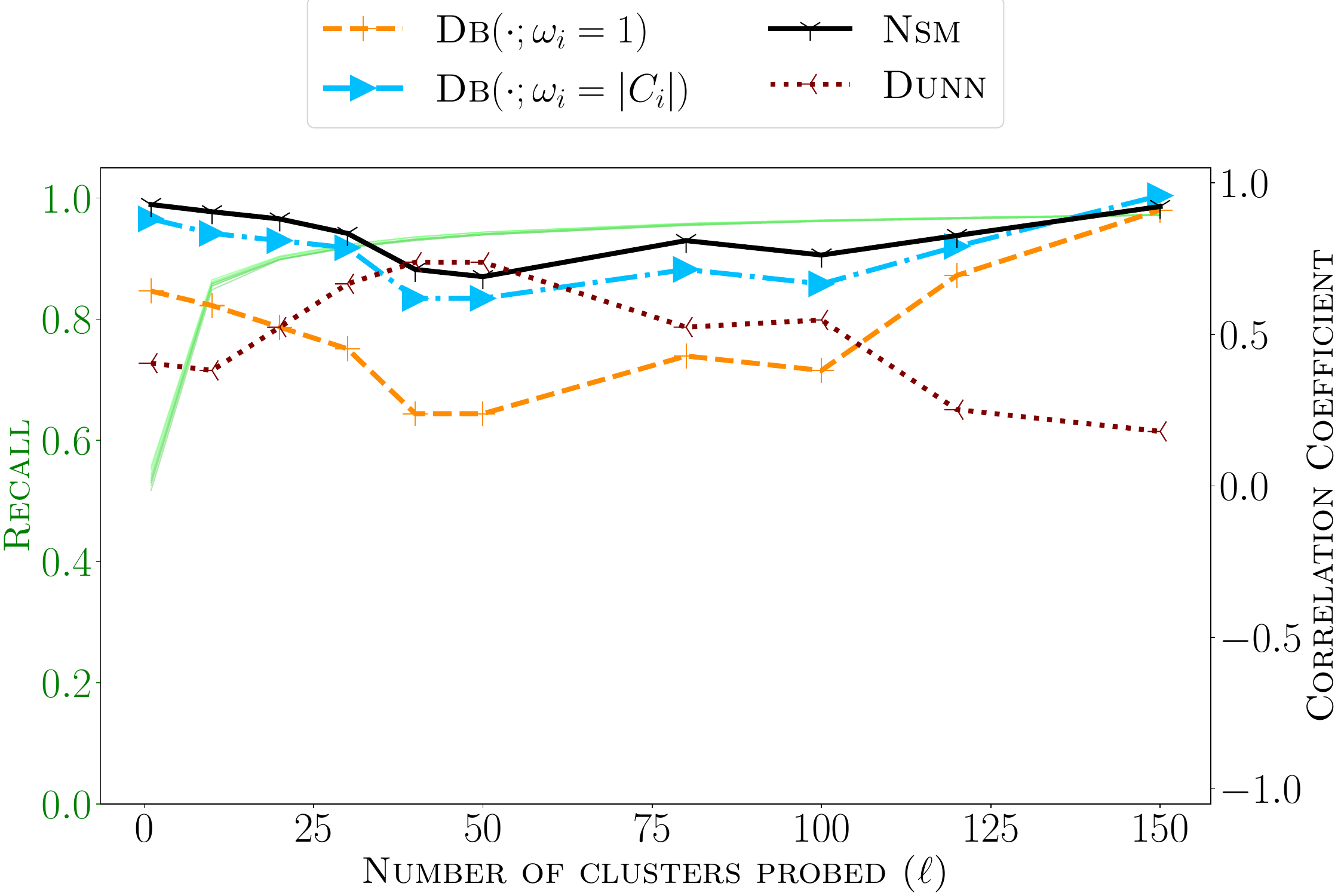}
    }
    \subfloat[\music]{
        \includegraphics[width=0.4\linewidth]{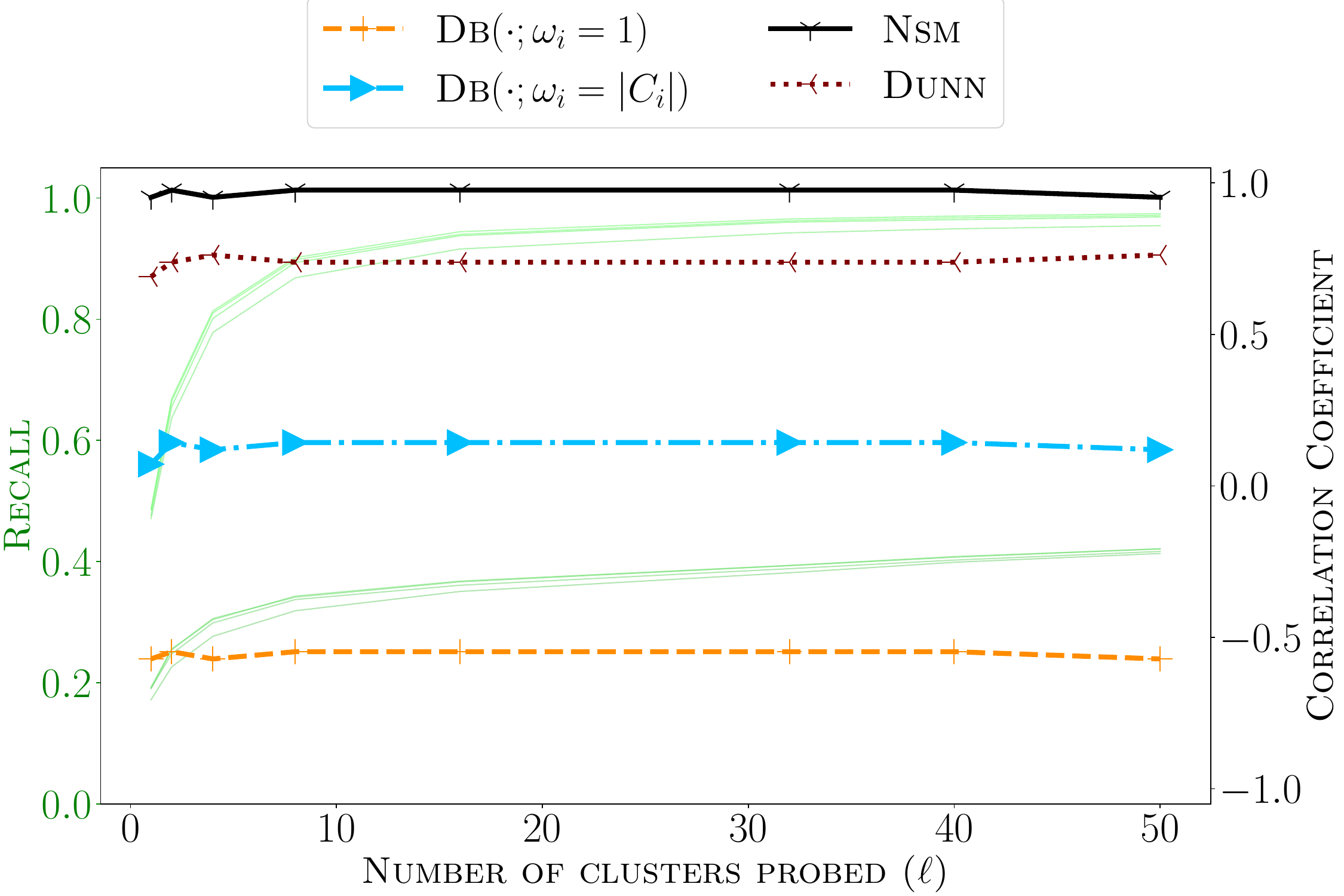}
    }
}
\centerline{
    \subfloat[\textimage]{
        \includegraphics[width=0.4\linewidth]{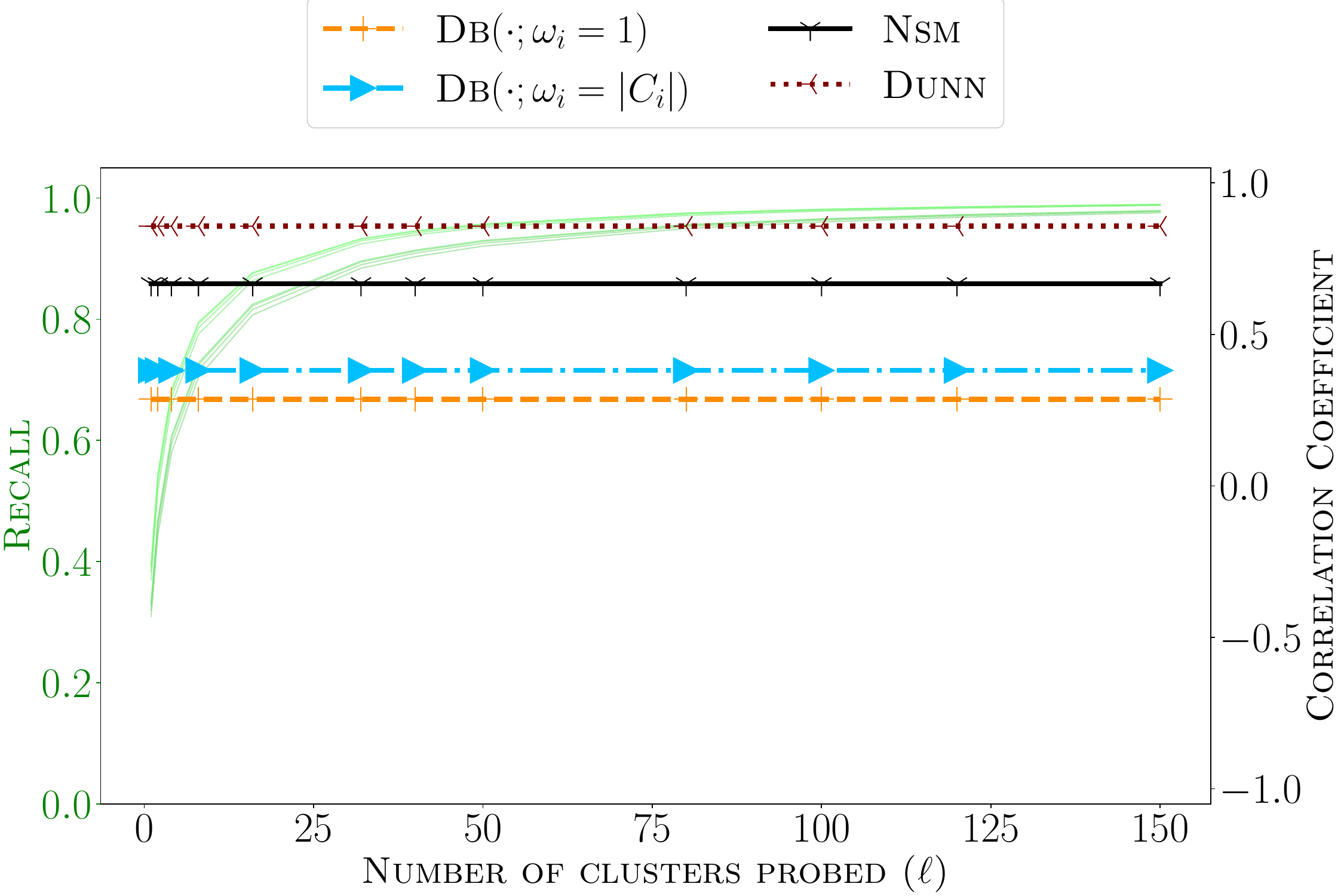}
    }
}
\caption{Figure~\ref{figure:appendix:experiments:cqm:ann-multiprobe} continued.}
\label{figure:appendix:experiments:cqm:ann-multiprobe-cntd}
\end{center}
\end{figure}

%% file: appendix/approximate-stability.tex
\newpage
\section{Clustering-\nsm using Approximate Stability}
\label{appendix:approximate-stability}

In Section~\ref{section:experiments:cqm}, we measured the clustering-\nsm of a clustering by considering \emph{exact} nearest neighbors to determine the stability of a set, as per Definition~\ref{definition:stability}. That operation, however, can prove expensive for large datasets as it has a time complexity of $\mathcal{O}(n^2)$ for a collection of $n$ points.

In this supplementary experiments, we instead use \emph{approximate} nearest neighbor to compute stability measures. We use clustering-based \ann search for this purpose, where we partition a dataset into $4\sqrt{n}$ clusters, and to find the approximate nearest neighbor of each point, we search only $10$ closest clusters. This change dramatically reduces the search time complexity.

The results of this experiment are shown in Figure~\ref{figure:appendix:experiments:cqm:ann} for the \ann search task, and Figure~\ref{figure:appendix:experiments:cqm:image-clustering} for the image clustering task. Comparing the results with Figures~\ref{figure:experiments:cqm:ann} and~\ref{figure:experiments:cqm:image-clustering} respectively reveals no perceptible difference in the standing of clustering-\nsm relative to other measures.

\begin{figure}[t]
\begin{center}
\centerline{
    \subfloat[Top-$5$]{
        \includegraphics[width=0.47\linewidth]{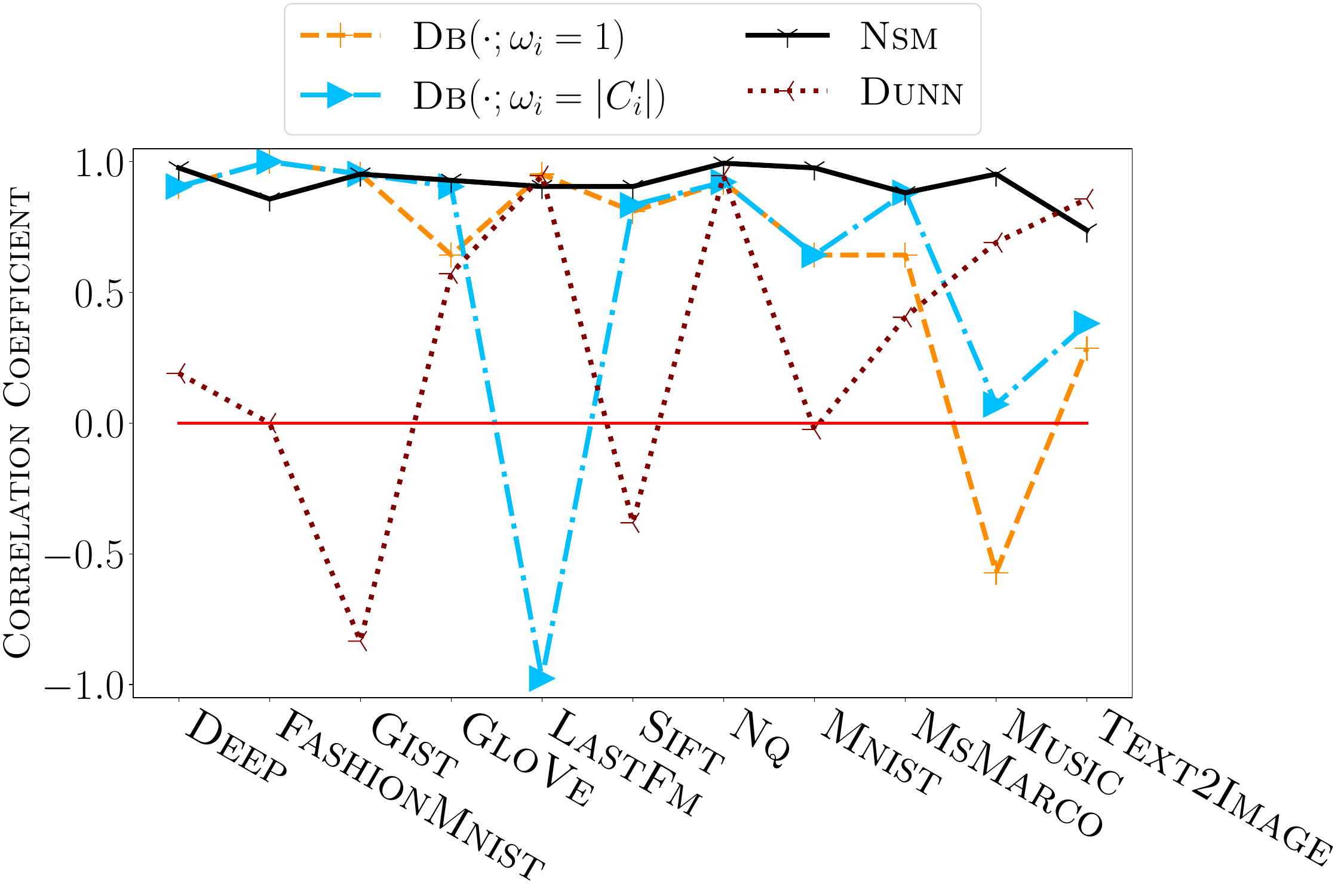}
    }
    \subfloat[Top-$10$]{
        \includegraphics[width=0.47\linewidth]{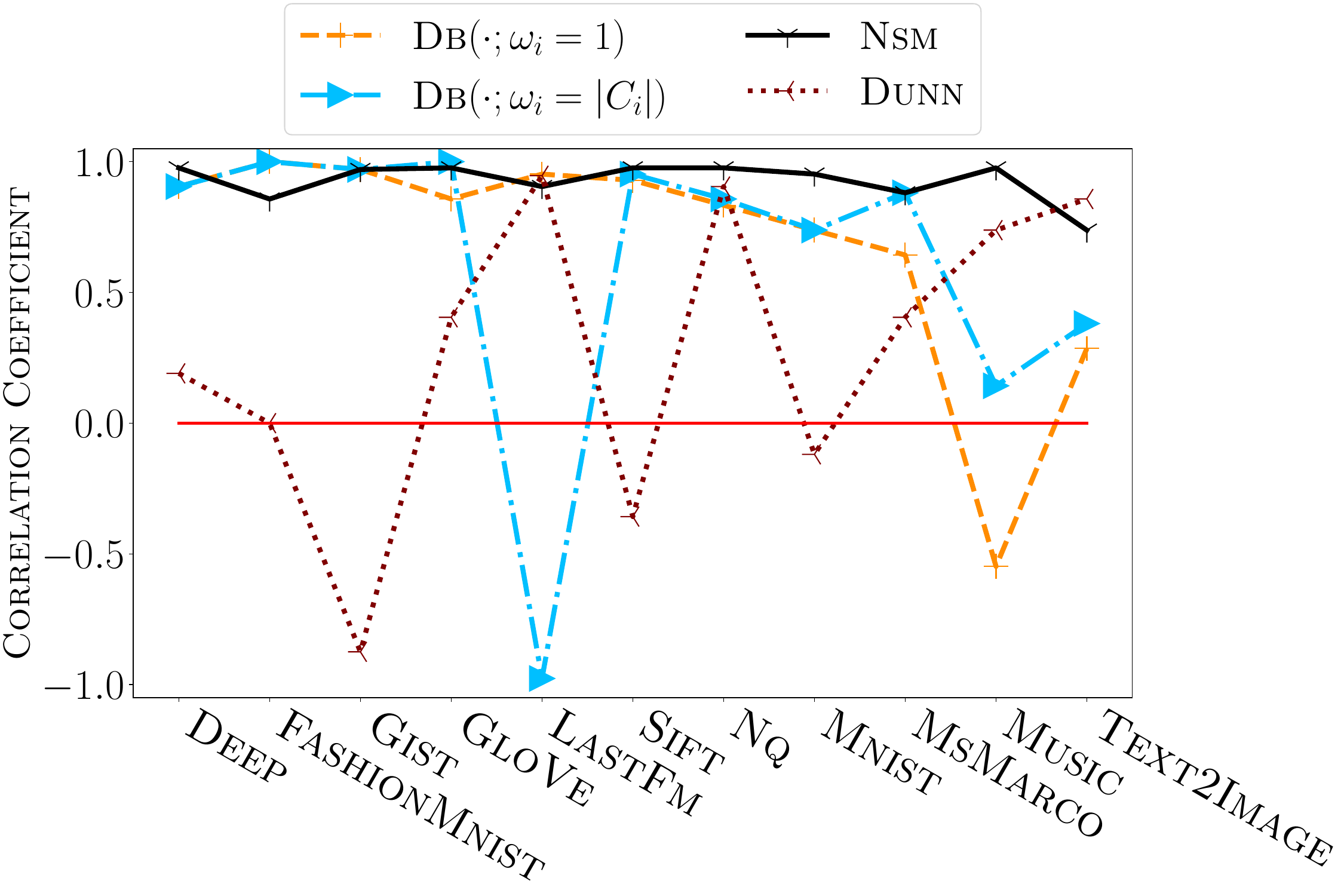}
    }
}
\caption{Spearman's correlation coefficient between clustering quality and top-$k$ \ann accuracy. For \db, we present the negated coefficient because smaller values indicate better clustering quality. For clustering-\nsm, we use approximate nearest neighbors to compute each set's stability.}
\label{figure:appendix:experiments:cqm:ann}
\end{center}
\end{figure}

\begin{figure}[!t]
\begin{center}
\centerline{
    \subfloat[Mutual Information]{
        \includegraphics[width=0.47\linewidth]{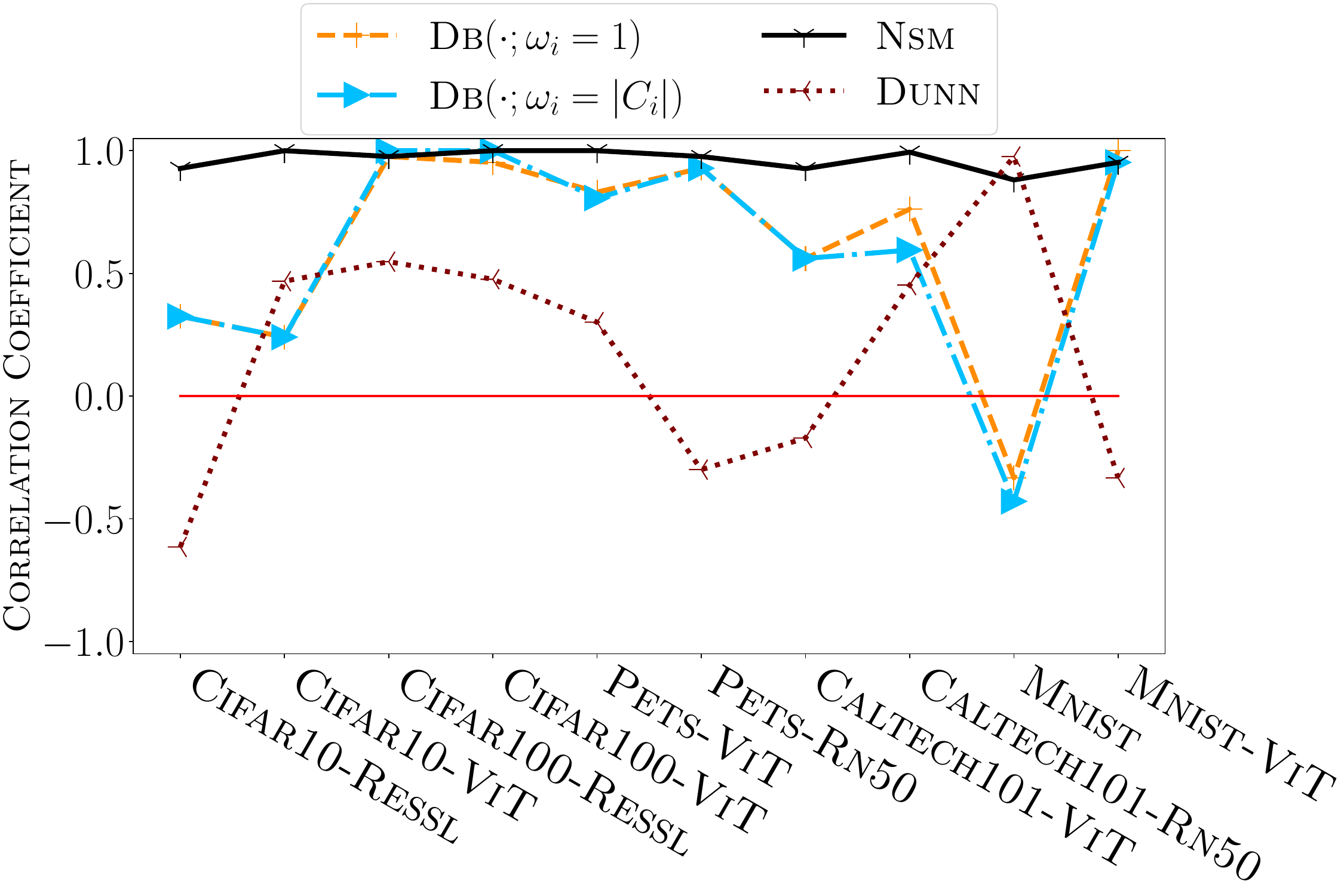}
    }
    \subfloat[Homogeneity]{
        \includegraphics[width=0.47\linewidth]{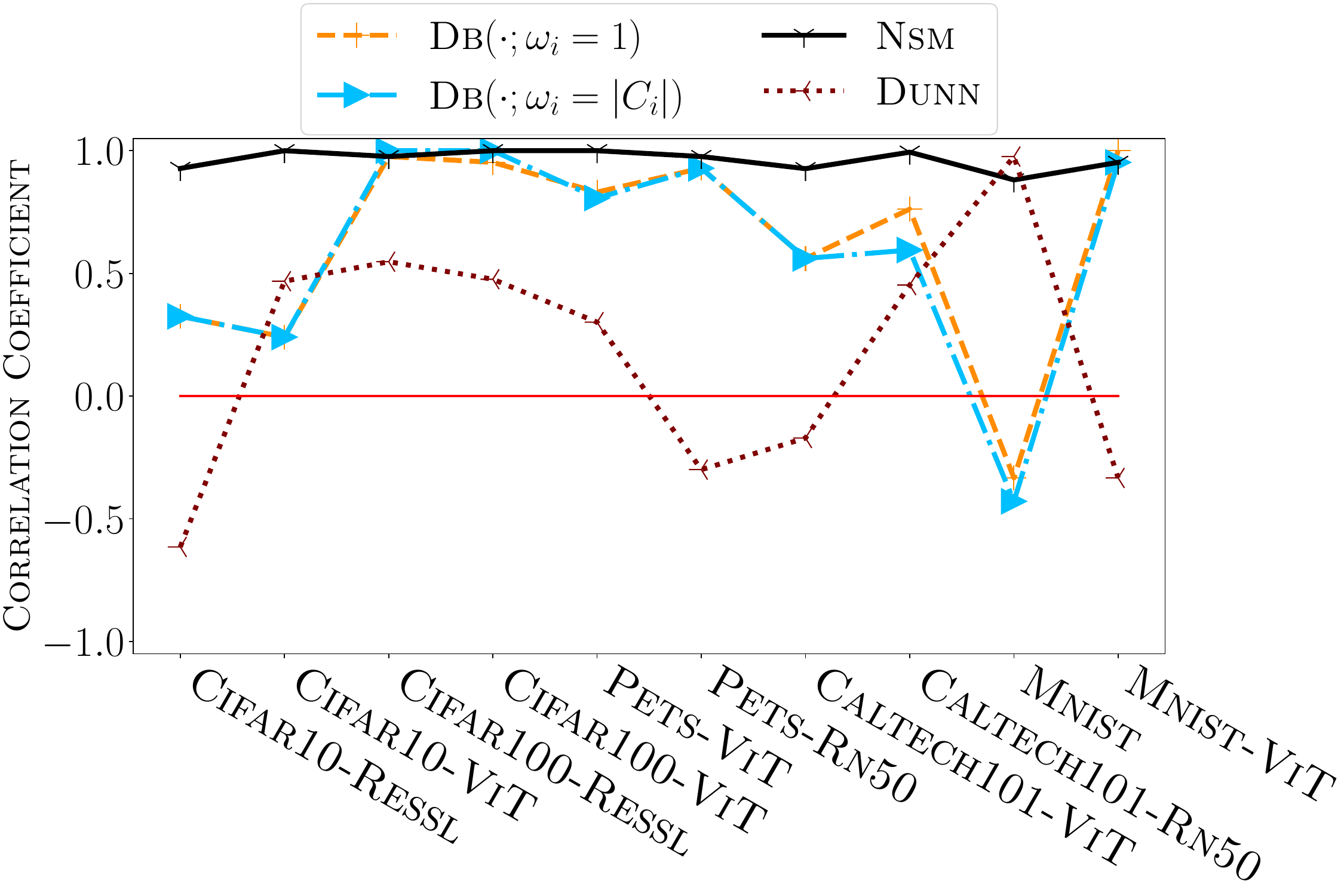}
    }
}
\caption{Spearman's correlation coefficient between clustering quality and external evaluation metrics for the task of image clustering. For \db, the coefficient is negated because a smaller index indicates better clustering quality. For clustering-\nsm, we use approximate nearest neighbors to compute each set's stability.}
\label{figure:appendix:experiments:cqm:image-clustering}
\end{center}
\end{figure}

%% file: appendix/point-nsm-full.tex
\newpage
\section{Distribution of \gls{pointnsm} over the Full Dataset}
\label{appendix:point-nsm-full}

The results presented in Figure~\ref{figure:experiments:point-nsm:distributions} and Table~\ref{table:point-nsm:correlations} were based on the \gls{pointnsm} distribution of $5\%$ of points in each datasets. In this section, we repeat the same experiments as in Section~\ref{section:experiments:clusterability} but using the full datasets.

Figure~\ref{figure:appendix:point-nsm:distributions} and Table~\ref{table:appendix:point-nsm:correlations}, which parallel the figure and table in Section~\ref{section:experiments:clusterability}, show that the findings do not change---the correlation coefficients are exactly the same. This justifies the use a small sample to compute \gls{pointnsm}, enabling high efficiency without any detriment to effectiveness.

\begin{figure}[t]
\begin{center}
\centerline{
    \subfloat[$r=2^{10}$]{
        \includegraphics[width=0.40\linewidth]{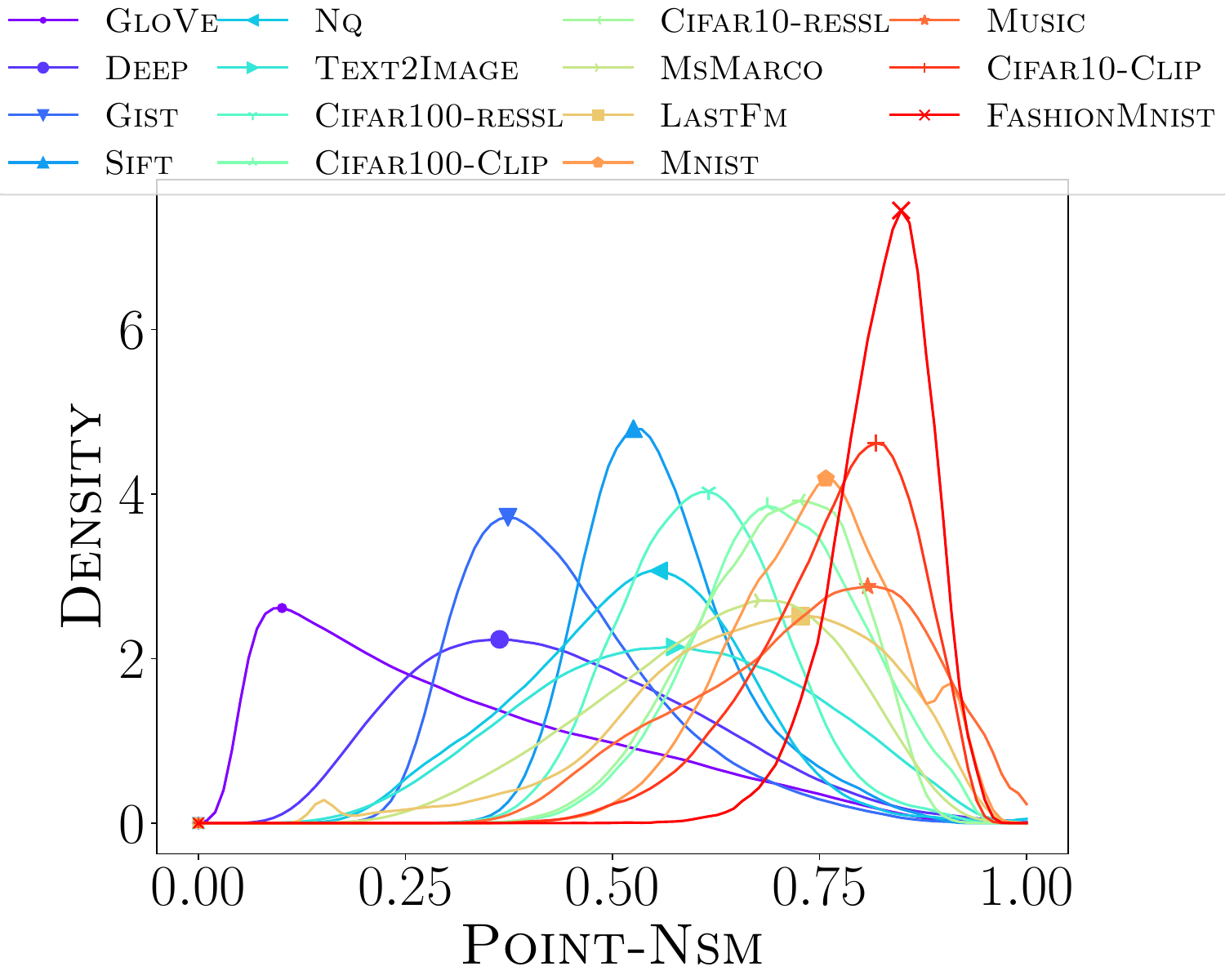}
    }
    \hspace{10pt}
    \subfloat[$r=2^{13}$]{
        \includegraphics[width=0.40\linewidth]{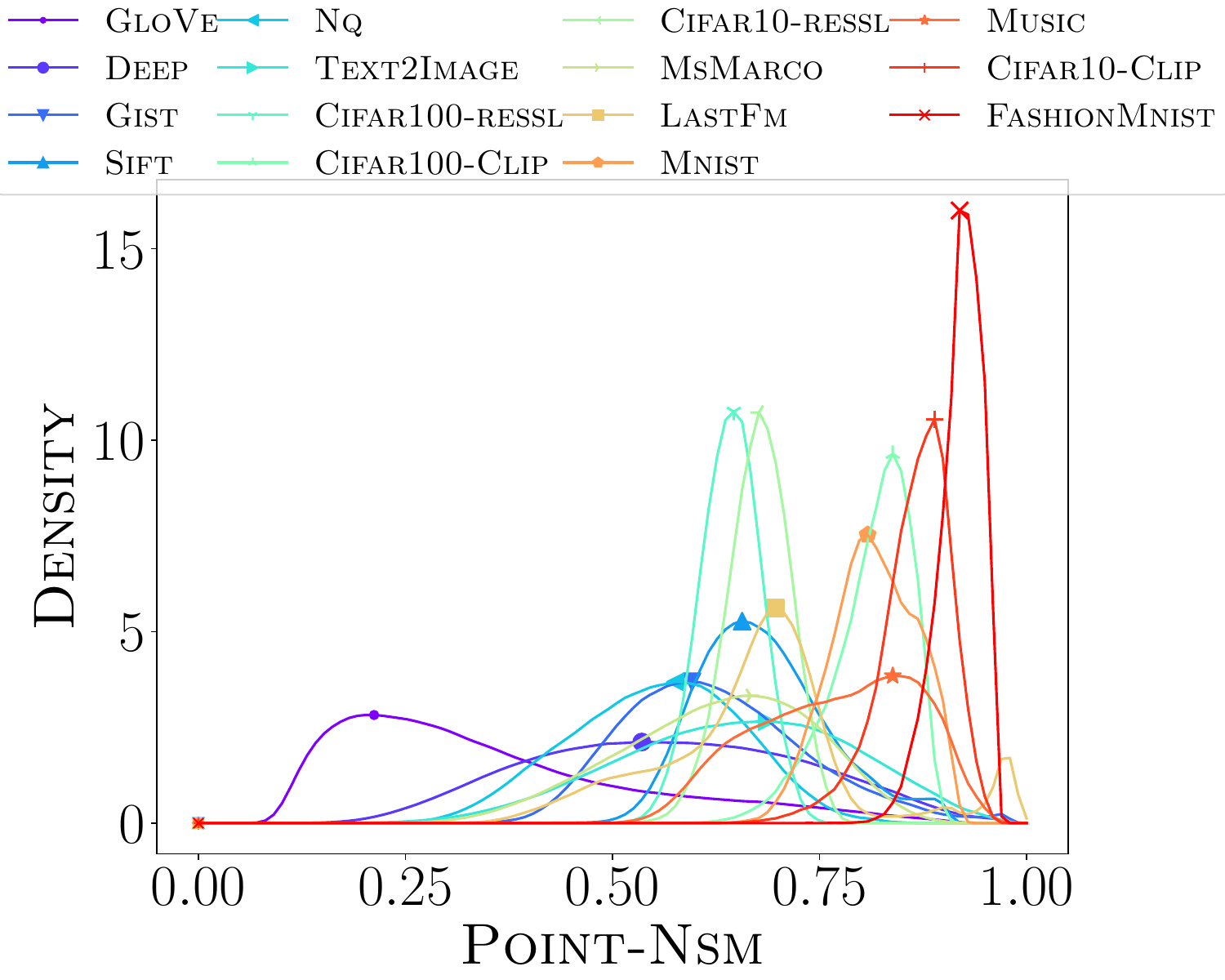}
    }
}
\caption{Point-\nsm distributions computed from all points within the various datasets.}
\vspace{-0.5cm}
\label{figure:appendix:point-nsm:distributions}
\end{center}
\end{figure}

\begin{table*}[t]
\footnotesize
\caption{Spearman's correlation coefficient between statistics from \gls{pointnsm} distributions (computed from full datasets) and the \gls{clusteringnsm} for clusterings obtained from standard and spherical \kmeans. All correlations are significant ($p$-value $<0.001$). Last row aggregates statistics and \gls{clusteringnsm} for all values of $r$. These coefficients are exactly the same as those reported in Table~\ref{table:point-nsm:correlations}.}
\label{table:appendix:point-nsm:correlations}
\begin{center}
\begin{sc}
\begin{tabular}{l|cc|cc}
\toprule
 & \multicolumn{2}{c}{Standard \kmeans} & \multicolumn{2}{c}{Spherical \kmeans}\\
\gls{pointnsm} radius & Mean & $0.1$-quantile & Mean & $0.1$-quantile \\
\midrule
$r=1024$ & 0.81 & 0.86 & 0.79 & 0.85 \\
$r=2048$ & 0.82 & 0.85 & 0.82 & 0.85 \\
$r=4096$ & 0.74 & 0.81 & 0.74 & 0.81 \\
$r=8192$ & 0.76 & 0.78 & 0.76 & 0.78 \\
\midrule
Combined & 0.79 & 0.83 & 0.82 & 0.85 \\
\bottomrule
\end{tabular}
\end{sc}
\end{center}
\end{table*}

%% file: appendix/point-nsm-misc.tex
\newpage
\section{Point-\gls{nsm} and Other \gls{anns} Algorithms}
\label{appendix:point-nsm-misc}

We argued in Theorems~\ref{theorem:nsm-as-clusterability} and~\ref{theorem:nsm-as-clusterability-general} that \gls{pointnsm} is an effective predictor of \gls{clusteringnsm} which, in turn, is predictive of the accuracy of clustering-based \gls{anns}. We later verified that claim empirically in Section~\ref{section:experiments:clusterability}. We then speculated in Section~\ref{section:conclusion} that \gls{pointnsm} may also serve as a theoretical tool for explaining the behavior of other classes of \gls{anns} algorithms, but left an investigation to future work. In this section, we present preliminary empirical evidence that elaborates the reason behind our speculation.

Let us consider one of the state-of-the-art graph-based \gls{anns} algorithms: \gls{hnsw}~\citep{hnsw2020}. Fix a value of $r$, defining the \gls{pointnsm} radius. We construct an \gls{hnsw} index using $\texttt{ef\_construction}=\texttt{ef\_search}=r$ and record its \gls{anns} accuracy for each of the $11$ \gls{anns} datasets used in the experiments in Section~\ref{section:experiments:clusterability}. In this way, we arrive at $11$ pairs of values consisting of a statistic from a dataset's \gls{pointnsm} distribution and \gls{hnsw}'s accuracy on that dataset. We then measure the correlation between the $11$ \gls{pointnsm} statistics and accuracy values, and report the results in Table~\ref{table:appendix:point-nsm:correlations:hnsw}.

As shown in the table, for all values of $r$, the $0.1$-quantile of the \gls{pointnsm} distributions is highly correlated with \gls{anns} accuracy: A dataset whose \gls{pointnsm} distribution (with radius $r$) has a smaller $0.1$-quantile also has a lower \gls{anns} accuracy when indexed and searched using \gls{hnsw}. The last column in the table shows the correlation between $33$ \gls{pointnsm} values and the same number of \gls{anns} accuracy values, by combining the measurements from all values of $r$.

\begin{table*}[t]
\footnotesize
\caption{Spearman's correlation coefficient between the $0.1$-quantile of \gls{pointnsm} distributions and \gls{hnsw}'s accuracy at $k=1$ for each $r$ separately. The last column shows the correlation between \gls{pointnsm} and \gls{anns} accuracy for all values of $r$.}
\label{table:appendix:point-nsm:correlations:hnsw}
\begin{center}
\begin{sc}
\begin{tabular}{ccc|c}
\toprule
$r=256$ & $r=512$ & $r=1024$ & Combined \\
\midrule
$0.83$ & $0.84$ & $0.84$ & $0.83$ \\
\bottomrule
\end{tabular}
\end{sc}
\end{center}
\end{table*}